\documentclass[11pt]{article}
\usepackage{fullpage}
\usepackage{amsfonts}
\usepackage{amssymb}
\usepackage{fancyhdr}
\usepackage{comment}
\usepackage{xspace}
\usepackage{tikz}
\usepackage{float}
\usepackage[title]{appendix}
\usepackage{caption}
\usepackage{subcaption}

\usepackage[margin=1in]{geometry}  
\usepackage[
            CJKbookmarks=true,
            bookmarksnumbered=true,
            bookmarksopen=true,
            colorlinks=true,
            citecolor=red,
            linkcolor=blue,
            anchorcolor=red,
            urlcolor=blue,
            ]{hyperref}

\usepackage{amsmath}
\usepackage{amsthm}
\usepackage{dsfont}
\usepackage{graphicx}
\usepackage{enumerate}
\usepackage[hypcap]{caption}
\usepackage{color, soul}
\usepackage{xcolor}
\usepackage{float}
\usepackage{algorithm}
\usepackage[noend]{algorithmic}
\newtheorem{theorem}{Theorem}
\newtheorem{remark}{Remark}
\newtheorem{lemma}{Lemma}

\newtheorem{assumption}{Assumption}
\newtheorem{definition}{Definition}
\newtheorem{proposition}{Proposition}

\newcommand{\red}{\color{red}}
\newcommand{\blue}{\color{blue}}
\newcommand{\violet}{\color{violet}}
\newcommand{\nb}[1]{{\sf\blue[#1]}}
\newcommand{\nbr}[1]{{\sf\red[#1]}}
\newcommand{\nbv}[1]{{\sf\violet[#1]}}

\newcommand{\indep}{\rotatebox[origin=c]{90}{$\models$}}

\def\kxb#1 {{\color{blue} [(KX) #1]}}
\def\kxr#1 {{\color{red} [(KX)  #1]}}

\newcommand{\Dir}{\mathrm{Dir}}
\newcommand{\Beta}{\mathrm{Beta}}

\makeatletter
\newtheorem*{rep@theorem}{\rep@title}
\newcommand{\newreptheorem}[2]{%
\newenvironment{rep#1}[1]{%
 \def\rep@title{#2 \ref{##1}}%
 \begin{rep@theorem}}%
 {\end{rep@theorem}}}
\makeatother

\newreptheorem{lemma}{Lemma}

\let\svthefootnote\thefootnote
\newcommand\freefootnote[1]{%
  \let\thefootnote\relax%
  \footnotetext{#1}%
  \let\thefootnote\svthefootnote%
}

\def \N{\mathbb{N}}
\usepackage{kx_macros}

\title{Learner-Private Convex Optimization}

\author{Jiaming Xu, Kuang Xu, and Dana Yang\thanks{
J.\ Xu and D.\ Yang are with The Fuqua School of Business, Duke University, Durham NC, USA, \texttt{\{jx77,xiaoqian.yang\}@duke.edu}.
K.\ Xu is with the Stanford Graduate School of Business, Stanford University, Stanford CA, USA, \texttt{kuangxu@stanford.edu}.
This research is supported by the NSF Grants IIS-1838124, CCF-1850743, and CCF-1856424.
}}

\begin{document}

\maketitle

\begin{abstract}

Convex optimization with feedback is a framework where a learner relies on iterative queries and feedback to arrive at the minimizer of a convex function. It has gained considerable popularity thanks to its scalability in large-scale optimization and machine learning. The repeated interactions, however, expose the learner to privacy risks from eavesdropping adversaries that observe the submitted queries. In this paper, we study how to optimally obfuscate the learner's queries in convex optimization with first-order feedback, so that their learned optimal value is provably difficult to estimate for an eavesdropping adversary. We consider two formulations of learner privacy: a Bayesian formulation in which the convex function is drawn randomly, and a minimax formulation in which the function is fixed and the adversary's probability of error is measured with respect to a minimax criterion. 

Suppose that the learner wishes to ensure the adversary cannot estimate accurately with probability greater than $1/L$ for some $L>0$. Our main results show that the query complexity  overhead is additive in $L$ in the minimax formulation, but multiplicative in $L$ in the Bayesian formulation.  Compared to existing learner-private sequential learning models with binary feedback, our results apply to the significantly richer family of general convex functions with full-gradient feedback. Our proofs learn on tools from the theory of Dirichlet processes, as well as a novel strategy designed for measuring information leakage under a full-gradient oracle. 



\end{abstract}

\section{Introduction}

Convex optimization with feedback is a framework in which an learner repeatedly queries an external data source in order to identify the optimal solution of a convex function. This interactive nature of the framework, however, is a double-edged sword. On the one hand,  iterative optimization methods offer inherent scalability since the learner is not required to possess the entire function from the start.  As such, it has found applications in large-scale distributed machine learning systems, such as Federated Learning \cite{mcmahan2017communication, FL_google2017}, where a learner interacts with millions of individual users (data providers) in order to perform training. On the other hand, the repeated interactions with external entities exposes the {learner} to potential adversaries who may steal the learned model by eavesdropping on the queries exchanged during the training process, a woe especially poignant when the system involves a large number of data providers, many of which could be an eavesdropper in disguise (\cite{juuti2019prada}, \cite[Section 4.3]{kairouz2019advances}). 

To formulate the model stealing attacks and quantify the learner's privacy, we adopt the framework of Private Sequential Learning proposed in a recent line of research,
aimed at quantifying the extra query complexities the learner has to suffer in order to ensure the submitted queries provably conceal the learned value \cite{tsitsiklis2018private, xu2018query, xu2019optimal}. The model is centered around a binary search problem where a learner tries to estimate an unknown value $X^* \in [0,1]$ by sequentially submitting queries and receiving binary responses, indicating the position of $X^*$ relative to the queries. Meanwhile, an adversary observes all of the learner's queries but
not responses, and tries to use this information to estimate $X^*$. The learner's goal is to design a querying strategy with a minimal number of queries so that 
she can accurately estimate $X^*$ while ensuring that the eavesdropping adversary cannot reliably estimate $X^*$. Progress has been made towards understanding the optimal querying strategies in this problem,  and upper and lower bounds on the query complexity have been developed that differ by additive constants in the case where the learner's queries are noiseless \cite{tsitsiklis2018private, xu2019optimal}, and 
are order-wise optimal in the case of noisy queries \cite{xu2019optimal}. 

While the original binary search formulation provides valuable insights, its assumption that the learner only has access to binary feedback is a severe restriction when it comes to modeling convex optimization.  Indeed, most  real-world applications  provide the learner access to significantly richer feedback such as a full gradient (e.g., model training in machine learning). We elaborate further on the potential applications of our model in Section~\ref{sec:rema_model_res}.

The main purpose of the present paper is to take a step towards closing this gap by studying learner-private optimization with general convex functions and a full-gradient oracle.  In a nutshell, our results demonstrate that the  most prominent features of the query complexity in the binary search model extend gracefully to the general convex optimization setting. However, to establish that this is the case is far from trivial. A major difficulty stems from the significantly enriched functional class: unlike in a binary search problem where the ground truth is fully described by a scalar (location of $X^*$), we will see that the private query complexity crucially depends on the shapes of the convex functions in a family, and not just the locations of their minimizers. 


This added richness necessitates the development of both new problem formulations and analytical techniques. We propose in this paper two new learner-privacy frameworks: a new minimax formulation, as well as a Bayesian formulation that generalizes earlier Bayesian private sequential learning to a full-gradient oracle. A number of new techniques are developed to analyze query complexity under these formulations: we introduce tools from the theory of Dirichlet processes to construct priors that convey the richness of the model. Tools from nonparametric Bayes theory are deployed for the analysis under such prior distributions. In addition to an enriched functional class, another fundamental challenge lies in the richness of the feedback. Unlike the binary search model, the responses aligns with the location of the query and the shape of the unobserved convex function to a great extent.
In the face of a more powerful learner equipped with a full-gradient oracle, we rely on a more sophisticated line of analysis to gauge the amount of information the responses reveal. We will discuss in more detail these ramifications in Section \ref{sec:rema_model_res}.

\subsection{Related Work}

\emph{Private information retrieval (PIR) and private function retrieval (PFR)} Our model formulation bears some similarities with the PIR~\cite{abadi1989hiding, chor1995private, gasarch2004survey} and PFR~\cite{mirmohseni2018private} framework. However, there are major distinctions which result in completely different dynamics between the learner and the adversary. In PIR, the database is assumed to contain a vector $(x_i)_{i\leq N}$. The learner's goal is to learn the evaluation $x_i$ at some index $i$ by querying the database, while preventing the database (adversary) from learning the value of $i$. The PFR problem is formulated similarly, except that the database is indexed by functions. Note that in PIR/PFC, the private index is assumed to be known to the learner a priori. In contrast, in our framework, the private information $X^*$ is something the learner herself is in the process of discovering. As a result, our problem is posed as a sequential learning problem. It has natural applications in model stealing attack prevention, where eavesdropping adversaries attempt to steal the model parameters by participating in the model training process. The fundamental difference between the two settings also leads to completely different techniques for analysis. For us, privacy is ensured by utilizing the adversary's lack of knowledge on the responses, which is not the case in PIR/PFC.

\emph{Data-owner privacy models} Similar to Private Sequential Learning, the private convex optimization problem we consider diverges significantly from the existing literature on differentially private iterative learning~\cite{song2013stochastic,abadi2016deep,agarwal2018cpsgd,jain2012differentially,melis2019exploiting}, a key difference being that the latter focuses on protecting data owners' privacy rather than learner's privacy. To protect data owners' privacy, the notion of differential privacy~\cite{Dwork08} is often adopted and privacy is often achieved by injecting calibrated noise at each iteration of the learning algorithms. In contrast, our work focuses on preventing the adversary inferring the learned model, which is conceptually closer to recent studies of information-theoretically sound obfuscation in sequential decision-making problems~\cite{fanti2015spy, luo2016infection, tsitsiklis2018delay, erturk2019dynamically, tang2020privacy}.
See~\cite{xu2019optimal} for a comprehensive discussion on the distinction between data-owner privacy models and this line of work. 

\emph{Strategic learning} In aiming to prevent modeling stealing, our work aligns with a growing literature on strategic learning and prediction \cite{aridor2020competing, ben2017best, ben2019regression, immorlica2011dueling, mansour2017competing}. These papers consider strategic learners who have gained access to their competitor's predicted samples, or even the competitor's entire predictive model. Then, they artificially adjust their own predictive model in order to outperform those of their competitors. In general, in equilibrium such competition could not only harm utilities for the learners involved, but also lead to lower overall social welfare, as defined by the prediction quality experienced by end consumers. Our work thus helps to preempt such pitfalls by providing a theft-proof framework for models training and adaptive data collection.

\section{The Model: Learner-Private Convex Optimization}
\label{sec:formulation}

We now introduce our model, dubbed Learner-Private Convex Optimization. The emphasis on the learner's privacy here is to distinguish our model from other forms of private sequential learning, especially those that focus on protecting the privacy of data owners (See proceeding discussion in the Introduction). 


\paragraph{Learner} Let $\mathcal{F}$ be a family of $\R$-valued convex functions with domain $[0,1]$, such that all elements in $\calF$ admit a unique minimizer. Suppose there is an unknown \emph{truth} $f^*\in \mathcal{F}$ with the minimizer $X^* := \arg\min_{x} f^*(x)$.  Fix $n \in \N$. Our decision maker is a \emph{learner} who wants to identify $X^*$ by sequentially submitting a total of $n$ queries in $[0,1]$ to an oracle. For the $i$th query,  $q_i$, the oracle returns a response $r_i$ that is equal to the gradient of $f^*$ at $q$: 
\begin{equation}
r_i = (f^*)'(q_i). 
\end{equation}
If $f^*$ is not differentiable at $q_i$, then $r_i$ is an arbitrary subgradient of $f^*$ at $q_i$.  We assume that the learner is allowed to introduce outside randomness, in the form of a random seed $Y$ that takes value in a finite discrete alphabet. Formally, we denote by $\phi$ the \emph{learner's strategy}, which consists of a sequence of mappings $\phi_0,\phi_1,...,\phi_{n-1}$ such that the $i$th query is generated as a function of all previous responses and the random seed: 
\begin{equation}
q_i=\phi_{i-1}(r_1,...,r_{i-1},Y). 
\end{equation}
Once the querying process is terminated, the learner constructs an estimator of the optimizer $X^*$,  $\widehat{X}$, based on the $n$ responses. We say that the learner strategy $\phi$ is $\epsilon$-accurate, if
\begin{equation}
\mathbb{P}_f\left\{\left|\widehat{X}-x \right|\leq \epsilon/2\right\}=1, \quad \forall f \in \calF, 
\end{equation}
where $x$ is the minimize of $f$ and the $\pb_f$ indicates the induced probability law when the truth $f^*$ is equal to $f$, and the probability is measured with respect to the randomness in the random seed, $Y$. 

\paragraph{Adversary} Meanwhile, an adversary is trying to learn $X^*$ by eavesdropping on the learner's queries: we assume that the adversary observes all $n$ queries submitted by the learner, but not their responses. Denote by  $\wt{X}$ the adversary's estimator, which is a (possibly random) function of $(q_i)_{i=1, \ldots, n}$. 
Wary of such an  adversary, the high-level objective of the learner are  to $(1)$ generate a  query sequence that is largely ``uninformative'' towards $X^*$, and  $(2)$ at the same minimizing the number of queries needed, $n$.

We next formalize in what sense a learner's strategy can be private. Generally speaking, a learner strategy is private if we can ensure that the adversary's estimator $\wt X$ is \emph{not accurate}. Importantly, different definitions of the adversary's accuracy will lead to drastically different definitions of privacy, and consequently, distinct algorithms,  guarantees and domains of applications.  In this paper, we will analyze two privacy metrics, Bayesian and minimax, that parallel the two paradigms in the statistics literature. The Bayesian formulation extends the Bayesian private learning model in \cite{tsitsiklis2018private}, while the minimax  formulation is new. 



\paragraph {Minimax} The truth $f^*$ is a deterministic but unknown function in $\mathcal{F}$. 
We say that a learner strategy $\phi$ is $(\delta,L)$-private if 
\begin{equation}
\label{eq:def.privacy.minimax}
\sup_{\widetilde{X}}\inf_{f\in \mathcal{F}}\mathbb{P}_f\left\{\left|\widetilde{X}-x\right|\leq\delta/2\right\}\leq 1/L,
\end{equation}
where the probability is measured with respect to the internal randomness employed by the learner's querying strategy and that used in the adversary's estimator. {In other words, the learner strategy is considered private if the adversary's minimax risk is large.}

\paragraph{Bayesian} The truth $f^*$ is drawn from a prior distribution $\pi$, a probability distribution over $\calF$.  We say that a learner strategy $\phi$ is $(\delta,L)$-private if
\begin{equation}
\sup_{\widetilde{X}} \pb \left\{\left| \widetilde{X} - X^*\right|   \leq \delta/2\right\} \leq 1/L,
\end{equation}
where the probability is measured with respect to all randomness in the system, including the prior $\pi$ and  any internal randomness employed by the learner's querying strategy and the adversary's estimator. 

\paragraph{Private query complexity}  Finally, we have come to the main metric of interest. In both the minimax and the Bayesian formulations, we define the optimal query complexity, $N(\epsilon,\delta,L)$, as the least number of queries necessary for there to exist an $\epsilon$-accurate learner strategy that is also $(\delta, L)$-private: 
\begin{align*}
N(\epsilon,\delta,L)=& \min\{n:\exists \phi\text{ with at most }n\text{ queries, }\\
& \text{that is }\epsilon\text{-accurate and }(\delta,L)\text{-private}\}.
\end{align*}

\section{Main Results}

\subsection{Minimax formulation} We will assume that the function class $\calF$ satisfies the following assumption: 

\begin{assumption}[Complexity of $\mathcal{F}$]
\label{assump:complexity}
Fix $f\in \mathcal{F}$ and interval $I\subset[0,1]$ that contains the minimizer of $f$.  Then, for every $x \in I$, there exists $g \in \mathcal{F}$ such that $g$ is minimized at $x$, and the gradient of $f$ and $g$ coincide outside of $I$.
\end{assumption}
Assumption~\ref{assump:complexity} is needed to rule out trivial cases where a learner may exactly pinpoint the location of the minimizer solely by looking at far-away gradients. We show in Section \ref{sec:proof}
that this richness assumption on $\mathcal{F}$ is in some sense necessary. 
Examples of function classes that satisfy Assumption~\ref{assump:complexity} include the set of all convex functions on $[0,1]$, and the set of all piecewise-linear convex functions on $[0,1]$. The next theorem is our main result for the minimax formulation: 

\begin{theorem}[Minimax Query Complexity]
\label{thm:minimax}
Assume that $\mathcal{F}$ satisfies Assumption~\ref{assump:complexity}. If $2\epsilon\leq\delta\leq 1/L$, then\footnote{Here and subsequently $\log$ refers to logarithm with base $2$.}
\[
2L+\log\frac{\delta}{\epsilon}-2
\leq N(\epsilon,\delta,L) \leq  
\begin{cases}
2L+\log\frac{\delta}{\epsilon} & \text{ if } L \ge \log \frac{1}{\delta} \\
 L+ \log\frac{1}{\epsilon} & \text{ o.w.} 
 \end{cases}
 \, .
\]
\end{theorem}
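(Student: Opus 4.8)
The plan is to establish the two bounds separately, treating the minimax problem as, in effect, a private binary-search problem. Assumption~\ref{assump:complexity} guarantees that the gradient of an $f\in\mathcal{F}$ outside any interval $I$ containing its minimizer can be matched by other members of $\mathcal{F}$ whose minimizers range over all of $I$, so the only information a query $q$ carries about $X^*$ is the sign of the response, and after $n$ queries the learner's knowledge of $X^*$ is exactly the ``bracket'' $(\max\{q_i:r_i<0\},\ \min\{q_i:r_i>0\})$. I would isolate this reduction as a lemma and use it in both directions.

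For the lower bound, $\epsilon$-accuracy forces this bracket to have length at most $\epsilon$ for (generically every) $f$, so $X^*$ always lies in a gap of length $\le\epsilon$ between two consecutive queries. I would then exhibit an adversary that, from the observed queries, identifies the ``short'' gaps (those of length $\le\epsilon$), merges short gaps lying within $\delta$ of one another into groups, and outputs the midpoint of a uniformly random group; since $X^*$ lies in a short gap and $2\epsilon\le\delta$, this lands within $\delta/2$ of $X^*$ whenever the correct group is picked. For $(\delta,L)$-privacy this forces at least $L$ groups, and they are pairwise $\delta$-separated by construction, hence query-disjoint with $\ge 2$ queries each --- giving $2L$ queries. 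Invoking Assumption~\ref{assump:complexity} once more to give $f^*$ a worst-case shape inside the $\delta$-window around $X^*$, a standard binary-search argument shows the group containing $X^*$ needs $\log(\delta/\epsilon)$ queries to shrink from width $\delta$ to width $\epsilon$; subtracting the two boundary queries of that group already counted among the $2L$ yields $2L+\log(\delta/\epsilon)-2$. The fiddly part is making ``group'', ``$\delta$-separated'', and ``the true group is also the fine one'' precise and tracking the additive constants; the one substantive point is ensuring the $2L$ ``privacy'' queries and the $\log(\delta/\epsilon)$ ``resolution'' queries overlap in exactly two queries, which I would arrange by taking the decoy groups $\delta$-separated from the $\delta$-window around $X^*$.

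For the upper bound I would build a strategy out of two pieces glued together by the random seed $Y$: a binary search that drives the bracket around $X^*$ down to width $\epsilon$ (cost $\log(1/\epsilon)$), and the creation of $L-1$ ``decoy brackets'' of width $\le\epsilon$ placed at locations mutually $\delta$-separated and $\delta$-separated from $X^*$ --- feasible precisely because $\delta\le 1/L$ leaves room for $L$ such locations in $[0,1]$. The seed $Y$ randomizes which of the $L$ resulting narrow brackets is real and, more importantly, the global layout, so that conditioned on the observed query sequence the posterior on $X^*$ is (at least) uniform over $L$ $\delta$-separated candidates, forcing every adversary to be correct with probability $\le 1/L$. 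Decoys that can be hung off points the binary search visits anyway cost one extra query each, the remainder cost two; a short accounting then gives $L+\log(1/\epsilon)$ when $L\lesssim\log\tfrac1\delta$ and $2L+\log(\delta/\epsilon)$ when $L\gtrsim\log\tfrac1\delta$ --- the two stated cases, which unify as $L+\max(L,\log\tfrac1\delta)+\log\tfrac\delta\epsilon$. I expect the main obstacle to be exactly this gluing: arranging the decoy structure and the $Y$-driven randomization so that refining the true bracket down to $\epsilon$ does not, in combination with the visible search path, betray which candidate is real --- that is, making the adversary's view genuinely symmetric across the $L$ candidates while keeping the learner $\epsilon$-accurate for every $f\in\mathcal{F}$. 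This is the step where the full-gradient oracle, rather than binary feedback, has to be handled with care, and where I anticipate needing the novel information-leakage argument alluded to in the abstract.
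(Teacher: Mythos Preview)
Your overall architecture matches the paper's: reduce to binary feedback via Assumption~\ref{assump:complexity}, use an information-set/covering argument for the $2L$ term, a bisection argument for the $\log(\delta/\epsilon)$ term, and a bisection-plus-decoys construction for the upper bound. Two points, however, are not yet right.

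\textbf{Lower bound: combining the two terms.} You correctly identify the substantive difficulty---ensuring a \emph{single} instance $(\tilde f,y)$ incurs both the $2L$ privacy queries and the $\log(\delta/\epsilon)$ resolution queries---but your proposed fix (``arrange the decoy groups $\delta$-separated from the $\delta$-window around $X^*$'') does not resolve it. The $L$ groups live in a particular query sequence $\bar q$ realized under specific $(f_j,y_j)$; if you then switch to a worst-case $\tilde f$ inside the $\delta$-window, the query sequence under $\tilde f$ need not be $\bar q$ at all, and the $2(L-1)$ decoy queries may vanish. The paper closes this gap with a temporal argument: among the $L$ points $x_1,\ldots,x_L$ in $\mathcal I(\bar q)$, let $x_j$ be the \emph{last} to be $\delta/2$-localized (sandwiched by queries within $\delta/2$ on both sides), say at time $T$. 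Define $I$ to be a $\Theta(\delta)$-interval around $x_j$ that contains no queries among $\bar q_1,\ldots,\bar q_{T-1}$. Then invoke Lemma~\ref{lmm:complexity} to produce $\tilde f$ agreeing with $f_j$ outside $I$ but requiring $\log(|I|/\epsilon)$ queries inside $I$. Because $\tilde f$ and $f_j$ agree outside $I$ and no query lands in $I$ before time $T$, the sequences $q(\tilde f,y_j)$ and $\bar q$ coincide up to time $T-1$---and by time $T-1$, \emph{all} $2L-1$ of the localization queries outside $I$ have already been submitted. Thus $\tilde f$ pays $2L-1+\log(\delta/(2\epsilon))=2L+\log(\delta/\epsilon)-2$. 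Your sketch does not have this ``last-localized'' device, and without it the two costs cannot be charged to the same instance.

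\textbf{Upper bound: the privacy mechanism.} Your plan leans on the random seed $Y$ to make the adversary's ``posterior over candidates'' uniform. But the minimax privacy criterion is $\sup_{\widetilde X}\inf_{f}\mathbb{P}_f(\cdot)\le 1/L$, not a Bayesian posterior statement, and you do not say concretely how randomizing the layout prevents the visible bisection path from revealing which bracket is being refined. The paper's construction is in fact \emph{deterministic} and sidesteps the issue: it places $L$ ``guesses'' (pairs $q,q+\epsilon$) along a bisection path, and---crucially---once a guess is found correct, it \emph{continues bisecting to the right} as if nothing happened. The effect is that every $f^*$ whose minimizer lies in one of the $L$ guess intervals $[1-2^{-i},1-2^{-i}+\epsilon]$ produces the \emph{identical} query sequence, so any adversary estimator is $\delta/2$-close to at most one of the $L$ (mutually $\delta$-separated) minimizers and the infimum over $f$ is $\le 1/L$. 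Your ``decoys hung off bisection points cost one extra query'' accounting is right and gives $L+\log(1/\epsilon)$ when $\delta\le 2^{-L}$; what is missing is the ``keep going right after a hit'' trick that makes the sequences literally indistinguishable. Randomization via $Y$ might be made to work, but you have not shown how, and it is not the route the paper takes.
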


Note that if there were no privacy consideration, the minimax optimal query complexity would be $\log (1/\epsilon)$.
Thus under the minimax formulation, a higher level of privacy $L$ leads to an 
\emph{additive} overhead in the  optimal query complexity, that is at most about $2L$.

\begin{remark}[Multidimensional Extensions]
By considering a separable class of functions, and using the $\ell_\infty$ norm to measure the error of the learner and the adversary's estimators, Theorem~\ref{thm:minimax} can be extended to $d$ dimensions. The upper and lower bounds of the query complexity take the same form, with $L$ replaced with $L^{1/d}$. See the supplementary material for the precise statement and proof.
\end{remark}

\begin{figure}[ht]
\centering
\begin{subfigure}{.5\textwidth}
  \centering
  \includegraphics[width=\linewidth]{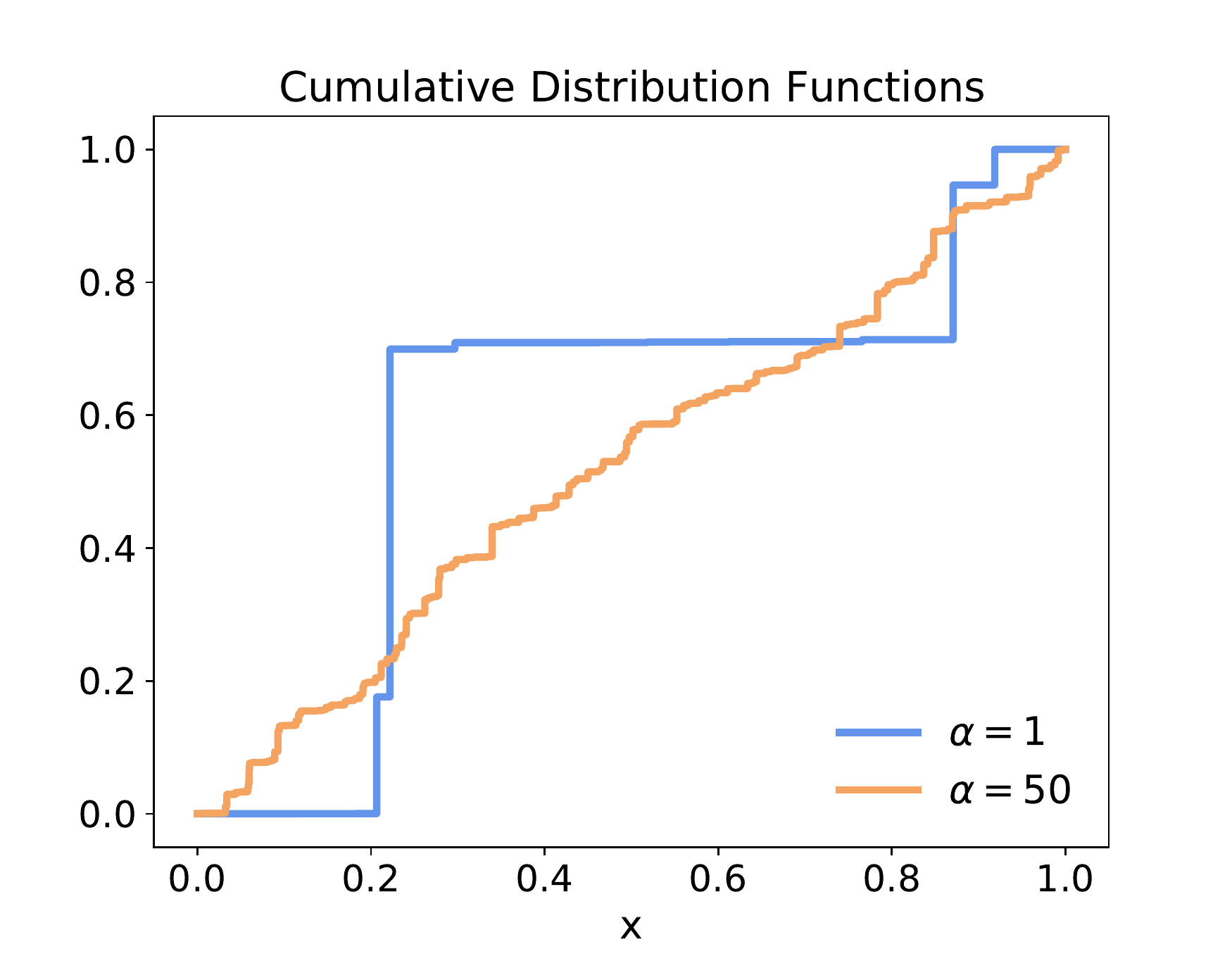}
  \label{fig:sub1}
\end{subfigure}%
\begin{subfigure}{.5\textwidth}
  \centering
  \includegraphics[width=\linewidth]{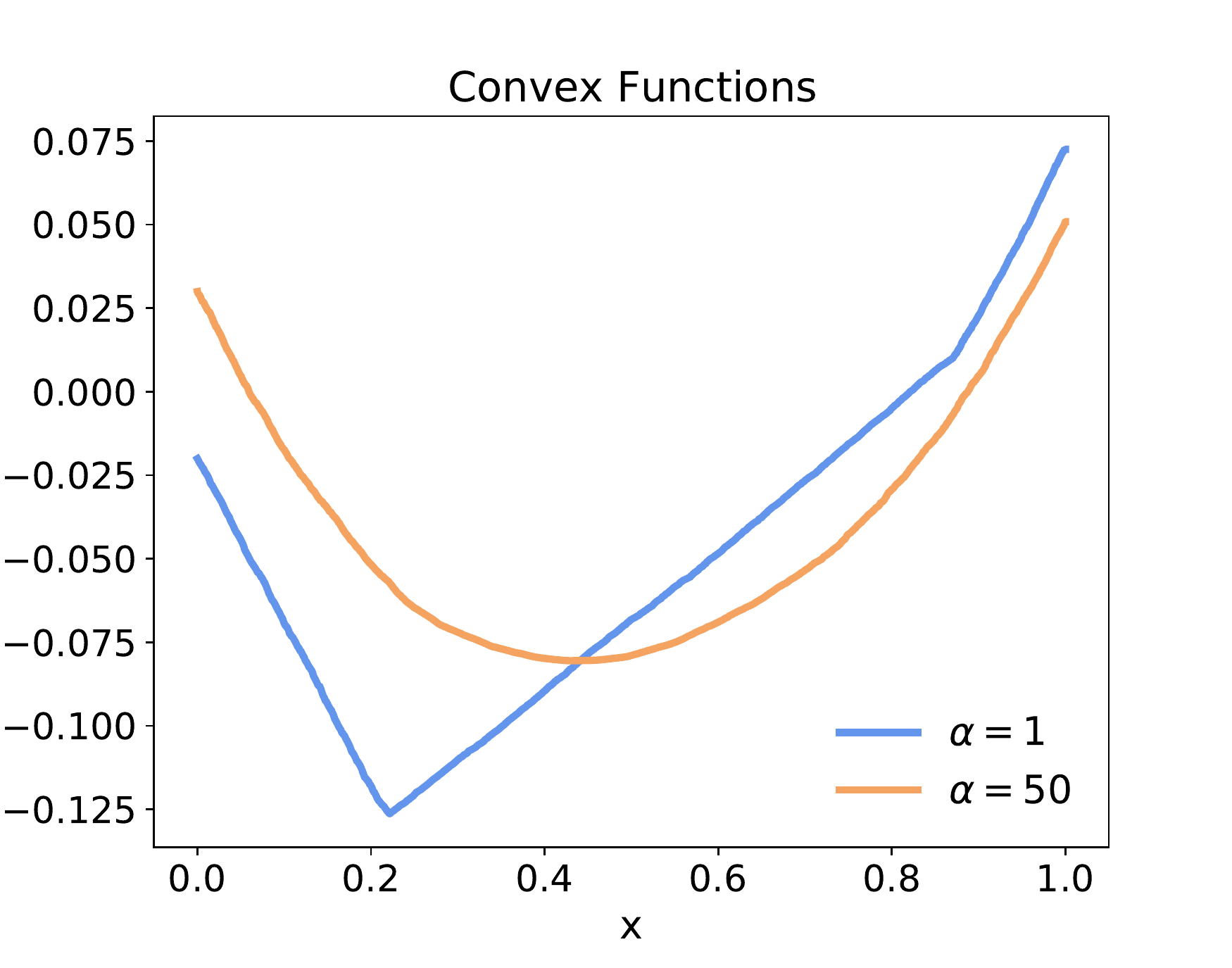}
  \label{fig:sub2}
\end{subfigure}
\caption{The left figure exemplifies realizations of $F$ following the Dirichlet Process with base function $\lambda_{[0,1]}$ and different concentration parameters $\alpha$. The right figure shows the corresponding convex functions $f^*$, with $\gamma_+=0.5$ and $\gamma_-=-0.5$.
}
\label{fig:Dir}
\end{figure}

\subsection{Bayesian formulation}

In the Bayesian formulation, we seek a function class and prior distribution that are sufficiently rich to capture real-world data, while at the same time amenable to analysis. A good candidate in this respect is the so-called Dirichlet process, a family of measures over non-decreasing functions,  which we will use to model the gradient function of $f^*$. Dirichlet processes are fundamental objects in nonparametric Bayes theory and widely used in Bayesian isotonic regression for modeling monotone functions \cite{lavine1995nonparametric, bornkamp2009bayesian, neelon2004bayesian}. We begin by defining a Dirichlet process: 

\begin{definition}[Dirichlet Process]
Given a base probability measure $\mu_0$ on $\mathcal{X}$ 
 and a concentration parameter $\alpha>0$. A random probability measure $\mu$ over $\mathcal{X}$ is said to follow the Dirichlet process $\text{DP}(\mu_0,\alpha)$, if for any finite partition of $\mathcal{X}=\cup_{i\leq n}\mathcal{X}_i$, 
\[
(\mu(\mathcal{X}_1),...,\mu(\mathcal{X}_n))\sim \Dir((\alpha \mu_0(\mathcal{X}_1),..., \alpha\mu_0(\mathcal{X}_n))),
\]
where $\Dir(c)$ denotes the Dirichlet distribution over the $n$-dimensional simplex $\Delta^{n-1}$ with density
\begin{equation}
g_{\Dir(c)}(x_1, \ldots, x_n) \propto \prod_{i=1}^n x_i^{c_i-1}, \quad x \in \Delta^{n-1}.
\end{equation}
\end{definition}
We now construct the prior distribution of $f^*$ using a Dirichlet process. The prior is parameterized by two quantities: 
\begin{enumerate}
\item a concentration parameter $\alpha>0$, which controls the dispersion of the distribution of the minimizer;
\item a probability distribution $\eta$ over $[0,1]$, which captures the range of gradients of $f^*$. We assume that $\eta$ admits a density that is bounded from above and away from $0$ (e.g., $\text{Unif}[0,1]$). 
\end{enumerate}

\begin{definition}[Bayesian Prior using Dirichlet Process] Fix $\alpha$ and $\eta$. Denote by $\lambda_{[0,1]}$ the Lebesgue measure restricted to $[0,1]$. Then, the prior $\pi$ corresponds to the following procedure for generating $f^*$: \footnote{Note that in this definition we have restricted the gradients to lie in $[-1,1]$ and the function $f^*$ to have zero intercept. Both restrictions are without loss of generality, since any constant offset will not change the location of a minimizer and similarly our results will carry through if one wishes to incorporate a different gradient scaling factor. }
\begin{enumerate}
\item Sample $\gamma_+$ from $\eta$. Set $\gamma_-=-\gamma_+$.


\item Sample $\mu$ from the Dirichlet process with concentration parameter $\alpha$ and base distribution $\lambda_{[0,1]}$. Let $F$ be the cumulative distribution function of $\mu$.
\item Set $f^*(x) =\gamma_- x +  \int_0^x (\gamma_+-\gamma_-)F(t)dt$, for $x\in [0,1]$.
\end{enumerate}
\end{definition}

{Note that $\left(f^*(x)\right)'=\gamma_+\left(2 F(x) -1\right)$ and thus the minimizer $X^*$ of $f^*$ corresponds to the median of $F$, or
more precisely the smallest $x$ for which $F(x) \ge 1/2$.}
By construction, $F$ is a monotone simple function that consists of countably many points of discontinuity that are dense on $[0,1]$. Its level of discreteness is modeled through the concentration parameter $\alpha$. For a small $\alpha$, the increase of $F$ from $0$ to $1$ is mostly from a few abrupt jumps, and the convex function $f^*$ resembles a piece-wise linear function with finitely many pieces; as $\alpha$ grows, the increase of $F$ becomes more gradual, and $f^*$ starts to concentrate around a smooth quadratic function. See Figure~\ref{fig:Dir} for some realizations of the distribution function $F$ and the corresponding convex function $f^*$ for different value of $\alpha$.\footnote{To plot the convex functions together, we shift them by some constants on the $y$-axis. This shift is irrelevant to the optimization task since the response only contains gradient information.}

The following theorem is our main result for the Bayesian formulation.

\begin{theorem}[Bayesian Query Complexity]
\label{thm:bayes}
Fix $\alpha>0$. Suppose that $2\epsilon\leq \delta< \frac{1}{2LH_\alpha}$, with $H_\alpha=(3+2e^{-1})\alpha+14$. Then
\[
c_1 L\log\frac{\delta}{\epsilon}
\leq N(\epsilon,\delta,L)
\leq L\log\frac{\delta}{\epsilon}+c_2 L+\log\frac{1}{\delta L},
\]
where $c_1$,$c_2$ are positive constants that only depend on $\alpha$ such that $c_1\rightarrow 1$ as $\alpha\rightarrow 0$. 
\end{theorem}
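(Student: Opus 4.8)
The plan is to treat the upper and lower bounds separately, in both cases leveraging the special structure of the Dirichlet-process prior, namely that the minimizer $X^*$ is the median of $\mu \sim \mathrm{DP}(\lambda_{[0,1]},\alpha)$, and that the gradient response at a query $q$ reveals $\gamma_+(2F(q)-1)$, i.e.\ essentially the value $F(q)$ of the CDF at $q$ (once $\gamma_+$ is pinned down by a single off-support query). The key probabilistic fact I would extract first is a quantitative anti-concentration / conditional-spread statement for the Dirichlet process: conditioned on the observed values $F(q_1),\dots,F(q_k)$ at a finite set of query points, the conditional law of the median $X^*$ still has a density bounded above by something like $H_\alpha$ times the length of the bracketing interval between consecutive queries straddling the median. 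This follows from the conjugacy of the Dirichlet process (the posterior given $F$-values on a partition is again a Dirichlet process on each cell, with updated concentration), together with a bound on the density of a Beta$(\alpha p, \alpha(1-p))$-type variable; the constant $H_\alpha=(3+2e^{-1})\alpha+14$ presumably comes from optimizing such a Beta density bound.

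For the \textbf{upper bound}, I would exhibit an explicit learner strategy of the obfuscated-bisection type, generalizing the Bayesian strategy of \cite{tsitsiklis2018private}: the learner first spends $O(\log \frac1{\delta L})$ queries to localize $X^*$ to an interval of width $\approx \delta L$ using ordinary bisection (exploiting that the gradient sign tells her which side of a query the median lies on), then runs $L$ independent ``candidate'' refinements, only one of which is the true trajectory, each using $\log(\delta/\epsilon)$ bisection steps to reach accuracy $\epsilon$, plus an additive $O(L)$ overhead to interleave/randomize the ordering of the $L\log(\delta/\epsilon)$ queries so that the adversary sees a symmetric-looking cloud of $L$ plausible minimizers. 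Accuracy is immediate from the final bisection depth; privacy requires showing that, from the adversary's view of the query cloud, the posterior of $X^*$ is spread roughly uniformly over $L$ disjoint $\delta$-separated regions, so any estimator is correct with probability $\le 1/L$. This last step is where the conditional-spread bound on the Dirichlet posterior is used: because $\delta < \frac{1}{2LH_\alpha}$, the $L$ candidate intervals can be chosen $\delta$-separated inside the width-$(\delta L)$ localization window while each still carrying posterior mass $\ge$ a constant, and the queries are designed to be conditionally exchangeable across the $L$ candidates.

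For the \textbf{lower bound} $c_1 L\log(\delta/\epsilon)$, I would argue that any $\epsilon$-accurate, $(\delta,L)$-private strategy must, with non-negligible probability, generate queries whose induced adversary-posterior on $X^*$ places mass $\le 1/L$ on every $\delta$-ball; since $\epsilon$-accuracy forces the learner's own posterior (which additionally conditions on the responses) to be concentrated in an $\epsilon$-ball, the responses must carry at least $\log L$ bits more information than in the non-private case, and each query under a full-gradient oracle carries only a bounded amount of information about the location of the median of the Dirichlet process — roughly one bit per bisection-like query, with the bound degrading as $\alpha\to0$ toward exactly one bit (which is why $c_1\to1$). Formally I would set up a mutual-information or Fano-type inequality: $I(X^*; r_1,\dots,r_n \mid q_1,\dots,q_n)$ must exceed $\log(\delta/\epsilon)+\log L$ for accuracy-plus-privacy, while a chain-rule bound shows each term $I(X^*; r_i\mid \text{past}, q_i)$ is at most $1/c_1$, using the Dirichlet conjugacy to compute the per-step information gain about the median explicitly.

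The main obstacle I anticipate is the privacy analysis of the upper-bound strategy: unlike the binary-feedback model, a full-gradient response at $q$ reveals the \emph{entire} value $F(q)$, not just a bit, so the learner cannot simply hide which bisection branch she is on — the adversary, knowing the query and (were he to know the response) the exact CDF value, has far more leverage. The resolution must exploit that the adversary does \emph{not} see responses, combined with a careful argument that the Dirichlet posterior given only the \emph{query locations} (marginalized over responses) remains diffuse across the $L$ candidates; making this marginalization rigorous, and tracking how the concentration parameter $\alpha$ enters the resulting density bound to produce the explicit $H_\alpha$ and the constants $c_1,c_2$, is the technically delicate core of the proof.
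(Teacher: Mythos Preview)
Your upper-bound sketch is in the right neighborhood but glosses over the central difficulty: under the Dirichlet-process prior the minimizer $X^*$ is \emph{not} uniformly distributed, so ``running $L$ independent candidate refinements'' and hoping the adversary's posterior is ``spread roughly uniformly over $L$ disjoint $\delta$-separated regions'' does not follow automatically. The paper addresses this by (i) dividing the localization interval at the \emph{quantiles} of the marginal law $\nu$ of $X^*$ so that the $L$ cells have equal $\nu$-mass, (ii) an extra round of median queries (Phase~3) to guarantee the working subintervals $J_1,\dots,J_L$ are genuinely $\delta$-separated (this is exactly where $H_\alpha$ enters, via a uniform upper bound on the density of $\nu$), and (iii) sampling the $L-1$ decoys $X_j$ from the \emph{conditional} laws $\nu_{J_j}$ and then arguing, by a genie-aided reduction, that $X^*\mid X_1,\dots,X_L$ is uniform on $\{X_1,\dots,X_L\}$. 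Your proposal does not identify any of these three mechanisms, and without them the privacy claim does not go through.

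The lower-bound plan has a sharper gap. A Fano/mutual-information argument of the form ``$I(X^*;r^{(n)}\mid q^{(n)})\ge \log(\delta/\epsilon)+\log L$ and each step contributes at most a constant'' yields at best $n\ge c_1\bigl(\log(\delta/\epsilon)+\log L\bigr)$, i.e.\ an \emph{additive} $\log L$ overhead. The theorem, however, asserts a \emph{multiplicative} $L$ overhead, $n\ge c_1\,L\log(\delta/\epsilon)$, and this is precisely the point of the Bayesian formulation. The paper obtains the factor $L$ not from information counting but from a specific adversary: the \emph{proportional-sampling} estimator, which draws $\widetilde X$ uniformly from the submitted queries. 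Privacy then forces $\mathbb{E}[\#\{i:q_i\in J^*\}]\le n/L$, where $J^*$ is the length-$\delta/2$ interval containing $X^*$. The remaining work is to show that accuracy forces $\mathbb{E}[\#\{i:q_i\in J^*\}\mid\mathcal{A}]\ge \log(\delta/\epsilon)$ for a ``good'' event $\mathcal{A}$ of probability $c_1$. This is done via the stick-breaking representation: on $\mathcal{A}=\{\beta_{(1)}\ge 1/2\}$ the median $X^*$ coincides with the location of the longest stick, and by independence of stick locations and lengths (plus self-similarity of the Dirichlet process) one gets that $X^*$ is conditionally \emph{uniform} on the current learner's interval. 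Uniformity implies the learner cannot shrink her interval faster than bisection, giving the $\log(\delta/\epsilon)$ count. Your per-step information bound ``$I(X^*;r_i\mid\text{past},q_i)\le 1/c_1$'' neither produces the factor $L$ nor is it clear how you would establish it for a full-gradient oracle without something like the stick-breaking conditional-uniformity argument.
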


The above theorem shows that, in the Bayesian formulation,  the query complexity overhead due to privacy constraints scales \emph{multiplicatively} with respect to the privacy level $L$. Note that this is substantially higher than the minimax setting where such overhead is only additive in $L$. When $\alpha \to 0,$ $F$ converges to a step function and our query complexity bounds recover the existing ones in
the binary search problem~\cite{xu2018query}, showing that $N(\epsilon,\delta,L) \sim L\log \frac{1}{\epsilon} $ as $\epsilon \to 0$ 
for fixed $\delta, L$.


\section{Discussion}
\label{sec:rema_model_res}

In this section, we examine some real-world applications of our privacy model and discuss some of the most salient features of our main results and modeling assumptions. 

{\bf Motivating examples} A learner naturally suffers from privacy breaches if the learning process involves interactions with third-party users. An example would be the aforementioned Federated Learning framework. A typical Federated Learning model training process can be posed as iterative optimization of some unknown function. Iterations of model updates are generated from the feedback from a large number of users (see e.g. the \emph{FederatedAveraging} algorithm~\cite{mcmahan2017communication}). Since the model updates (queries) are broadcasted to the participating users, the learner is exposed to eavesdropping attacks. Due to the high cost of large-scale model training, it is of great importance to protect the learner from such privacy breaches, and do so at a minimal cost~\cite{kairouz2019advances}.

Another potential application is pricing optimization, where the goal is to learn the optimal release price of a product by conducting market experiments at test price points (queries). See \cite{xu2019optimal,tsitsiklis2018private} for more detailed discussions on the Federated learning and pricing optimization examples.

Given the close connection between convex and monotone functions, our work can also be applied to learning monotone functions, for example to clinical dose-response studies~\cite{ramgopal1993nonparametric, bornkamp2009bayesian}. In dose-response analysis, the potency curve $\mu(x)$ is a monotone function that models the treatment effectiveness as a function of the dosage. An important problem is to estimate the minimum effective dose ($\mathsf{MED}$)
\[
\mathsf{MED}=\min_x\{x:\mu(x)>\mu(0)+\Delta\}
\]
for some threshold $\Delta$. Note that the $\mathsf{MED}$ is the minimizer $X^*$ of some unknown convex function $f^*$ (e.g. $f^*(x)=\int_0^t \mu(t)dt-[\mu(0)+\Delta]x$).
We also remark that the Dirichlet process is widely used in isotonic regression for modeling monotone functions~\cite{lavine1995nonparametric, bornkamp2009bayesian}, as we do when modeling the gradient of the convex function.

\paragraph{Applying the Bayesian and Minimax privacy criteria.}  Our results show that the two privacy criteria lead to distinct query complexity scalings, so it would be instructive to understand in what application domain each metric is most applicable. The Bayesian formulation is more straightforward: both the adversary and the learner are assumed to have access to the historical data that forms the prior distribution, and all probabilities in various guarantees are measured with respect to such shared common knowledge. We expect the Bayesian formulation to be most relevant in data-driven machine learning and online optimization such as in Federated Learning and pricing optimization; the aforementioned dose-response analysis is also a natural application of the Bayesian formulation due to the close connection between potency curves and convex functions.


The minimax formulation is a new metric proposed in this paper, and we discuss here some nuances with this definition. Note that minimax guarantees in traditional statistical learning are typically the strongest, since they hold over any adversary choice of problem parameter. However, in our setting, the minimax formulation provides arguably  the {weakest} privacy guarantee due to the negation inherent in its definition: a learner strategy is minimax-private as long as there does \emph{not} exist a minimax-accurate adversary estimator. For instance, even if adversary is able to accurately predict $X^*$ under the majority of functions in $\calF$, failing only over a small subset, the learner can still proclaim its strategy to be private under the minimax formulation. As a result, we see that the query complexity is significantly lower for the same combination of $(\delta, L)$ under the minimax formulation than the Bayesian one. 

If the minimax  formulation is weak, then when is it an appropriate metric? We tends to believe that the formulation is appropriate if the application requires the adversary to  use minimax-accurate estimators (in the traditional statistical sense).  One interesting example is in law and criminal justice. Here, a prosecutor should have to prove that the accuracy of any conclusion drawn from evidence holds up \emph{regardless} of the value of a certain hidden parameter. Indeed, many legal systems currently require that criminal convictions be reached only if the evidence can prove guilt ``beyond reasonable doubt'' (cf.~\cite{1881miles, 1895coffin, young2001juries}). Any supposed prior on crucial, unobserved parameters can be ill-defined and potentially discriminatory. Other potential applications include autonomous driving~\cite{ren2020improving}, where the performance guarantee of an estimator needs to be valid in the worst case, for the sake of public safety. In these examples, a minimax-private learner strategy will effectively prevent the adversary from coming up with \emph{any} viable estimator, thus render the adversary powerless. 


\paragraph{Comparisons with private sequential learning.} As mentioned in the Introduction, our
convex optimization framework generalizes the Private Sequential Learning (PSL) model. As such, the two settings share similarities (as one would expect), as well as some marked differences. Recall that in the PSL framework, the responses are {binary} and only indicate whether the minimizer is to the left or right of a given query; this is equivalent, in our setting, to returning only  the sign of the gradient. The minimax and Bayesian formulations proposed in this paper parallel the deterministic and Bayesian formulations in PSL, respectively. Our minimax formulation is new, but it does have a fundamental connection to the deterministic formulation of PSL, where a learner strategy is considered private if its queries are guaranteed to generate a large set of ``plausible'' targets (information set), with a large covering number; we  explore this formally in Section \ref{sec:proof.minimax}.
 Our Bayesian formulation is a natural generalization of the Bayesian PSL model: we now assign a prior over the entire function, as opposed to only the location of the minimizer. Notably, our Bayesian formulation recovers the original Bayesian PSL problem in the limit where the concentration parameter $\alpha$ in the Dirichlet prior approaches $0$. As such, our Bayesian formulation includes the original Bayesian PSL model as a special case.  

Our main results recover similar dependencies on the level of privacy, with overheads that are additive and multiplicative in $L$ in the minimax and Bayesian formulations, respectively. The upshot in our setting is that  the results are established in a substantially more general setting of convex optimization. 


There are several major differences that distinguish our private convex optimization framework from the PSL model. First and foremost, the learner now has access to the entire gradient instead of only its sign. A most direct implication of this enriched information structure is that, when analyzing the amount of information leakage of a  learner strategy, we will have to keep track of the distributions over target {functions}, as opposed to only the minimizers, as was the case in PSL. Moreover, when the learner has access to full gradients, it is in principle possible for the learner to gather information about the minimizer's precise location even from queries that are submitted far away from the minimizer, which was not possible within bisection search. For instance, if the underlying target function is known to be quadratic, then two queries placed {anywhere} are sufficient to uncover the minimizer. To address these complexities, our goal is to precisely measure the amount of information about the minimizer that the learner {and} adversary may obtain from a given sequence of queries. We will do so both by developing more sophisticated information theoretic arguments, and by exploiting structural properties of the Dirichlet process. 



\paragraph{Open questions} Our results leave open a number of questions. For the Bayesian query complexity in one dimension, there remains a gap between the leading constants in the upper and lower bounds, in the regime where $\alpha$ is bounded away from zero. Generalizing the main theorems to a multi-dimensional setting, where $x \in \R^d, d\geq 2$, is also interesting and practically relevant.  We take a first step in this direction by extending our results to multi-dimensional separable functions (see supplementary material), while the general case with non-separable objective functions remains open and appears to be challenging. Our problem formulation only considers first-order feedback. An interesting direction is to consider convex optimization with more general types of feedback, e.g., bandit feedback~\cite{agarwal2013stochastic}.

\paragraph{A different notion of minimax privacy in \cite{tang2020optimal}}
A recent work \cite{tang2020optimal}  also aims to extend the private sequential learning model of \cite{tsitsiklis2018private} to convex optimization. They use a different notion of minimax privacy criteria that bear some superficial similarities to ours. 
However, the definition of privacy in  \cite{tang2020optimal} contains crucial errors that render it vacuous,  in the sense that 
 there cannot exist any private learner strategy satisfying that definition. 
To be precise, here is Definition 2 of \cite{tang2020optimal}: fix $\epsilon, \delta\in (0,1)$. A learner strategy is said to be $(\epsilon, \delta)$-private if for any adversary estimator $\wt X$ \emph{and} any truth $f \in \calF$, 
\begin{equation}
\pb_f( \mbox{err}(\wt X, f) \leq \epsilon )\leq \delta, 
\label{eq:tang_def}
\end{equation}
where $\mbox{err}(\cdot, \cdot)$ is a certain error function which measures the discrepancy between the adversary estimator and the true minimizer. For instance, in our example $\mbox{err}(\wt X, f) = |\wt X - \arg\min f(x)|$. 

The problem with this privacy definition is  that it can never be satisfied by any learner strategy. Indeed, for any $f\in \calF$ with minimizer $x^*$, there always exists an adversary estimator that trivially yields zero estimation error with probability one: simply set $\wt X = x^*$, without even taking into account the queries. Under this trivial estimator, we automatically have $\pb_f( \mbox{err}(\wt X, f) =0)= 1$, so \eqref{eq:tang_def} cannot possibly hold uniformly across all adversary estimators and all $f$. Unfortunately,  this would further suggest that the analysis and conclusions in \cite{tang2020optimal} contain errors as well. 


\section{Proof of Main Results}\label{sec:proof}

We present in this section the proofs of our main results. We begin by giving an overview of the key steps. 

\subsection{Overview of Main Ideas}\label{sec:proof.overview}



\paragraph{Minimax setting} 

Since the response contains the full gradient information, the key challenge in the analysis is to track the amount of information available to the learner. Note that aside from the directional information $\mathds{1}\{X^*\geq q\}$, the response for a query $q$ contains additional information on $(f^*)'(q)$. The key message in the proof under the minimax setting, is that under the Assumption~\ref{assump:complexity} on the richness of the family of functions, only the directional information is relevant to the learning task. Therefore, it suffices to only track the learner's knowledge with the directional information from the responses.

Starting with the upper bound, we design a querying strategy that is $\epsilon$-accurate, $(\delta,L)$-private, and submits at most $\max\{2L+\log(\delta/\epsilon), L+\log(1/\epsilon)\}$ queries. In particular, our querying strategy only utilizes the directional information of the gradient responses. Firstly, note that since the gradient responses contain the binary directional information, the learner can always check whether an interval contains $X^*$ by querying the two endpoints. We refer to a pair of queries at $q$ and $q+\epsilon$ as a {\it guess}. The key privacy-ensuring mechanism is to check $L$ guesses that are $\delta$ apart from each other. By doing so, the learner manually plants $L$ possible locations for $X^*$ that an adversary cannot rule out without observing the responses, thus achieving $(\delta,L)$-privacy.

To prove the lower bound, we need to show that a querying strategy that only utilizes the directional information can be optimal. Firstly, let us give a heuristic argument of why only the gradient information is relevant to learning $X^*$ under Assumption~\ref{assump:complexity}.
Given $(f^*)'(a)<0$ and $(f^*)'(b)>0$, under Assumption~\ref{assump:complexity}, $X^*$ can be anywhere between $a$ and $b$ regardless of the value of the gradients $(f^*)'(a)$, $(f^*)'(b)$. We should point out that the richness assumption is necessary. For example suppose $\mathcal{F}$ is the family of convex polynomial functions with fixed degree $d$. Then the learner can solve for the $X^*$ by submitting $d$ distinct queries at arbitrary locations, making both learning and obfuscation trivial.

The lower bound proof contains two main ingredients.
\begin{enumerate}[(a)]
\item Step 1: Rigorously justify the claim that under Assumption~\ref{assump:complexity}, the learner does not benefit from the additional gradient information aside from the one-bit directional response. In particular, we show that the learner cannot search faster than the bisection method on any interval $I\subset [0,1]$. Therefore, for each interval of length $\delta$, it takes at least $\log(\delta/\epsilon)$ queries in $I$ to achieve $\epsilon$-accuracy, in the worst case.

\item Step 2: Relate the adversary's statistical performance to the size of the information set~\cite{tsitsiklis2018private} of a query sequence $q$, defined as
\[
\mathcal{I}(q)=\left\{x\in [0,1]: \exists f\in \mathcal{F}\text{ and } y,\;\;s.t.\;\; x=\arg\min f,\text{ and }q(f,y)=q\right\}.
\]
The information set contains all possible values of $X^*$ that could lead to the query sequence $q$. We show that to ensure the adversary achieves $\delta$-accuracy with probability at most $1/L$,
there must be some $q$ for which the $\delta$-covering number of $\mathcal{I}(q)$ is at least $L$. Note that from the $\epsilon$-accuracy requirement, each member of $\mathcal{I}(q)$ is sandwiched between a pair of queries in $q$ that are at most $\epsilon$-apart. 
Therefore, $q$ contains at least $L$ such pairs of queries, contributing a total of $2L$ queries. 
\end{enumerate}

After performing these two steps, some challenges remain. The functions associated with $q$ (in step 2) may not coincide with the worst-case instances that arise from step 1.
Therefore, the remaining task is to combine the two lower bounds $\log(\delta/\epsilon)$ and $2L$. For this step, we show the existence of some interval $I$, such that for some $f$ minimized in $I$, the learner must pay not only the $\log(\delta/\epsilon)$ queries for accuracy, but also the $2L$ queries for privacy. The high-level idea behind the proof is to divide $q$ into two sub-sequences $q_\mathsf{before}$, $q_\mathsf{after}$, before and after the $2L$ queries (in step 2) are submitted.
The key observation is that $q_\mathsf{before}$ is shared by a large class of functions whose minimizers lie in some $\delta$-length interval $I$. For all these functions, the cost of $2L$ queries would have been committed in $q_\mathsf{before}$. For at least one of them, an extra cost of $\log(\delta/\epsilon)$ queries must be paid in $q_\mathsf{after}$.

\paragraph{Bayesian setting}
Similar to the minimax setting, the upper bound here is also established by analyzing a constructive algorithm. The key challenge in designing a private learning algorithm in the Bayesian setting arises from the fact that the prior distribution on $X^*$ is always non-uniform under the Dirichlet process model.  In particular, we can no longer simply apply the replicated search strategy from \cite{xu2019optimal}, since the non-uniform distribution of $X^*$ provides the adversary with additional prior information.

To address this difficulty, our key algorithmic idea is to find $L$ intervals that occupy the same prior mass, while at the same time are at least $\delta$-separated from each other. One of these intervals contains the true value $X^*$. On each of the other $L-1$ intervals, we sample a proxy for $X^*$ according to the conditional distribution of $X^*$ restricted to the interval. Via a genie-aided reduction argument, we show that the adversary cannot perform better than a random guess among the $L$ candidates: the truth and the $L-1$ proxies. By construction of the intervals, the $L$ candidates are at least $\delta$ apart. Therefore the adversary cannot achieve an additive error of $\delta/2$ with probability higher than $1/L$.

For the lower bound, the challenge again lies in tracking and quantifying the amount of information the learner gains from the responses. Compared to the binary search model, the full gradient responses can potentially reveal too much information to the learner. To tackle this challenge, our key proof strategy is to find a event on which the learner cannot gather information on $X^*$ too rapidly. The proof follows the following main steps. 

\begin{enumerate}[(a)]
\item Step 1: quantify the learner's information. We adopt the notion of ``learner's intervals", $I_0,I_1,...$. Here, $I_0=[0,1]$ and $I_i$ is the smallest interval that the learner knows to contain $X^*$ after the first $i$ queries. 

\item Step 2: analyze the conditional distribution of $X^*$ over the learner's interval. This is the key step of the proof. We want to find a ``good" event on which the learner does not possess too much information on the location of $X^*$. In this step, we construct an event $\mathcal{B}$, such that
\begin{equation}\label{eq:uniform}
X^*\mid \mathcal{B} \sim \mathrm{Unif}[I_i\cap J],
\end{equation}
where $J$ is an arbitrary subinterval of $[0,1]$. Here, $\mathcal{B}$ is an event that encodes all the information available to the learner up to time $i$, the assumption that $X^*\in J$, and some other desirable feature of the unknown convex function $f^*$. The construction of $\mathcal{B}$ crucially utilizes the stick-breaking characterization of the Dirichlet Process, and the proof of~\eqref{eq:uniform} heavily relies on the self-similarity property. The self-similarity property helps control the amount of information about the shape of $f^*$ inside the learner's interval, given all the queries and responses outside.

\item Step 3: control the speed at which the learner's interval shrinks. Divide $[0,1]$ into $2/\delta$ subintervals $J_1,...,J_{2/\delta}$ of length $\delta/2$, and let $J^*$ denote the subinterval of contains $X^*$. In this step, from~\eqref{eq:uniform}, by integrating over instances of $\mathcal{B}$, and letting $J$ range over the $2/\delta$ subintervals, we show that for some suitably-defined event $\mathcal{A}$,
\begin{equation}\label{eq:learner.interval.big}
\mathbb{E}\left(\log\frac{|I_{i+1}\cap J^*|}{|I_i\cap J^*|} \, \Big\rvert \, \mathcal{A}\right) \geq -\mathbb{P}\left\{q_{i+1}\in I_i\cap J^*\mid \mathcal{A}\right\}\geq -\mathbb{P}\left\{q_{i+1}\in J^*\mid \mathcal{A}\right\}.
\end{equation}

\item Step 4: from~\eqref{eq:learner.interval.big}, via a simple telescoping sum and an application of Jensen's inequality, we can deduce that
\[
\mathbb{E}\left(\text{number of queries in }J^*\mid \mathcal{A}\right) \geq \log\frac{\delta}{2}-\log \mathbb{P}\left(|I_n\cap J^*|\mid \mathcal{A}\right) \geq \log\frac{\delta}{\epsilon},
\]
where the second inequality follows from the $\epsilon$-accuracy requirement.
By consider an adversary who adopts the {\it proportional-sampling} strategy~\cite{xu2018query}, we have for any querying strategy that is $(\delta,L)$-private,
\[
n=\text{total number of queries}\geq L\mathbb{E}\left[\text{total number of queries in }J^*\right] \geq \mathbb{P}\left(\mathcal{A}\right)\cdot L\log\frac{\delta}{\epsilon}.
\]
\end{enumerate}

\subsection{Proof under the Minimax Setting}\label{sec:proof.minimax}

\begin{proof}[Proof of the upper bound in Theorem~\ref{thm:minimax}.]

Define a guess at $q$ as a pair of queries placed at $q$ and $q+\epsilon$. The guess allows the learner to test whether $X^*$ is contained in the $\epsilon$-length interval $[q,q+\epsilon]$. To ensure privacy, we create $L$ potential locations for $X^*$ that are at least $\delta$-separated but induce the same querying sequence. That is achieved by submitting $L$ {\it guesses} that are $\delta$-separated. Once guessed correctly, the learner's accuracy requirement is automatically fulfilled and the remaining queries can be used to conceal $X^*$ from the adversary. 
We consider the cases $\delta\leq 2^{-L}$ and $\delta>2^{-L}$ separately. The querying strategy is contained in Algorithm~\ref{alg:minimax}.


\begin{algorithm}[H]
\caption{Querying Strategy under the Minimax Setting}
\label{alg:minimax}
\begin{algorithmic}[1]
\STATE
Let $I=[0,1]$.
\IF {$\delta\leq 2^{-L}$}
\STATE 
Submit the first guess at $1/2$.

\STATE
Recursively submit the remaining $L-1$ guesses via bisection: if none of the submitted guesses is correct, update $I=[a,b]$ according the gradient $(f^*)'(q)$ at the previous guess $q$. If $(f^*)'(q)\leq 0$, then $X^*\geq q$, so we let the updated $I$ be $[q,b]$; otherwise update $I$ to be $[a,q]$. Submit the next guess at the midpoint of the updated $I$. 

\STATE 
Once a guess is found to be correct, always (do this also for all the remaining guesses) update $I$ to be the right half of $I$, and submit the next guess at the midpoint of the updated $I$. 

\ELSE
\STATE 
Submit the first guess at $0$.
\STATE
Let $K$ be an integer solution in $\{0,1,...,L-1\}$ such that $\ell_K:=2^{-K}/(L-K)\in [\delta,2\delta]$. When $\delta>2^{-L}$, a solution always exists.

\STATE 
Submit the next $K$ guesses via bisection. Update $I$ accordingly. As in the $\delta\leq 2^{-L}$ case, once any guess is found to be correct, always update $I$ to its right half.

\STATE
Divide $I$ into $L-K$ equal length subintervals. Submit the next $L-K-1$ queries at the endpoints of the subintervals (excluding the $2$ endpoints of $I$).

\ENDIF
\IF {none of the guesses is correct}
\STATE
Run bisection search on $I$ until reaching $\epsilon$-accuracy. 
\ELSE
\STATE
Fill the remaining query sequence with trivial queries at $1$.
\ENDIF
\end{algorithmic}
\end{algorithm}


%

We first prove the upper bound in the case $\delta\leq 2^{-L}$. In total, $L+\log(1/\epsilon)$ queries are submitted under Algorithm~\ref{alg:minimax}. The strategy is clearly $\epsilon$-accurate. To see that it is also $(\delta,L)$-private, note that all $f^*$ whose minimizer lies in one of the $L$ intervals $[1/2,1/2+\epsilon]$, $[3/4,3/4+\epsilon]$, ..., $[1-2^{-L}, 1-2^{-L}+\epsilon]$ share exactly the same query sequence. Under Assumption~\ref{assump:complexity}, for each $i$ there exists at least one function $f_i$ minimized at some $x_i\in [1-2^{-i}, 1-2^{-i}+\epsilon]$. When $\delta\leq 2^{-L}$, the $x_i$'s are at least $\delta$ apart from each other. Therefore no adversary can achieve $\inf_{f\in \{f_1,...,f_L\}}\mathbb{P}_f\{|\widetilde{X}-x|\leq \delta/2\}>1/L$.

When $\delta>2^{-L}$, the total number of queries is at most $\log(\delta/\epsilon)+2L+1$. Note that the first guess at $0$ always contains a trivial query at $0$. Removing the trivial query yields a query complexity of $\log(\delta/\epsilon)+2L$. To prove $(\delta,L)$-privacy, note that for if $f^*$ is minimized in one of the $L$ intervals $[0,\epsilon]$, $[1-2^{-i}, 1-2^{-i}+\epsilon]$ for $i\leq K$, or $[1-2^{-K}+i\ell_K, 1-2^{-K}+i\ell_K+\epsilon]$ for $i\leq L-K-1$, 
then they induce the same query sequence. This completes the proof of the upper bound.

\end{proof}

We now turn to the lower bound. As a first step, we prove that if $\mathcal{F}$ satisfies Assumption~\ref{assump:complexity}, then the learner cannot search faster than the bisection method on any interval $I\subset[0,1]$. The lemma below contains a formal statement of this claim. Note that by taking $I=[0,1]$, Lemma~\ref{lmm:complexity} immediately implies a lower bound of $\log(1/\epsilon)$ on the optimal query complexity. 


\begin{lemma}
\label{lmm:complexity}
Suppose $\mathcal{F}$ satisfies Assumption~\ref{assump:complexity}. Let $\phi$ be an $\epsilon$-accurate querying strategy. Then for each $f\in\mathcal{F}$, each interval $I\subset[0,1]$ that contains the minimizer of $f$, and each realization of the random seed y, there exists $\widetilde{f}\in \mathcal{F}$, such that
\begin{enumerate}[(1)]
\item under $\phi$, the query sequence $q(\widetilde{f},y)$ contains at least $\log (|I|/\epsilon)$ queries in $I$;
\item the gradient of $\widetilde{f}$ and $f$ coincide outside of $I$.
\end{enumerate}
\end{lemma}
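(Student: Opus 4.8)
The plan is to reduce the claim to the classical fact that bisection is optimal for locating a point via sign queries, and the engine that makes this reduction work is Assumption~\ref{assump:complexity}, which lets us "relocate" the minimizer arbitrarily inside $I$ without disturbing anything the learner sees outside $I$. Fix $f$, the interval $I$, and a realization $y$ of the random seed. First I would run the strategy $\phi$ against $f$ and isolate the queries that fall inside $I$; write them in the order submitted, say $q_{j_1}, q_{j_2}, \dots$, and note that the responses to queries outside $I$ are completely determined by the gradient of $f$ restricted to the complement of $I$. The key point is that these "outside" responses are shared by every competitor $g \in \mathcal{F}$ furnished by Assumption~\ref{assump:complexity}: such $g$ has the same gradient as $f$ outside $I$, hence produces the identical outside responses, hence the learner's behavior on a run with truth $g$ is indistinguishable from the run with truth $f$ up until the moment an inside query reveals something.

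Next I would set up the adversarial argument showing $\phi$ cannot beat bisection on $I$. Maintain an interval $I^{(0)} \supseteq I^{(1)} \supseteq \cdots$ with $I^{(0)} = I$, where $I^{(t)}$ is the sub-interval of $I$ still "consistent" after the $t$-th inside query has been answered adversarially. When the $(t{+}1)$-st inside query $q$ lands in $I^{(t)}$, the adversary returns the sign of the gradient so as to keep the larger of the two pieces $I^{(t)} \cap [0,q]$ and $I^{(t)} \cap [q,1]$; so $|I^{(t+1)}| \ge |I^{(t)}|/2$ in the worst case. Queries outside $I^{(t)}$ (but possibly still inside $I$) can be answered consistently with the surviving interval and do not shrink it further in the adversary's favor — actually I should be slightly careful here: every inside query, wherever it lands, gets answered to preserve the larger consistent sub-interval, so after $k$ inside queries the surviving interval has length at least $|I|/2^{k}$. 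Crucially, by Assumption~\ref{assump:complexity} the surviving interval is genuinely non-empty of candidates: for any $x$ in it there is $\widetilde f \in \mathcal{F}$ minimized at $x$ with gradient agreeing with $f$ outside $I$, and this $\widetilde f$ is consistent with every response given along the run (outside responses match $f$; inside responses match the adversary's sign choices, which were chosen to be compatible with $x$ lying in the surviving interval). This is the step I expect to be the main obstacle: verifying that the adversary's sign answers, the requirement that $\widetilde f$'s gradient matches $f$ outside $I$, and the demand that $\widetilde f$ be minimized exactly at the chosen $x$ can all be met simultaneously by a single function in $\mathcal{F}$ — this is exactly what Assumption~\ref{assump:complexity} is engineered to give, but one has to check the gradient-sign constraints imposed by the inside queries are consistent with "$x$ is the minimizer," which they are because the adversary always kept $x$'s side.

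Finally I would close the argument by contradiction on the query count. Suppose the run of $\phi$ on truth $f$ (with seed $y$) submits fewer than $\log(|I|/\epsilon)$ queries inside $I$; say it submits $k$ of them with $2^{k} < |I|/\epsilon$. Then the surviving interval after all inside queries has length $> \epsilon$, so it contains two distinct points $x_1 \ne x_2$ with $|x_1 - x_2| > \epsilon$, and by the construction above there are $\widetilde f_1, \widetilde f_2 \in \mathcal{F}$ minimized at $x_1, x_2$ respectively, each with gradient agreeing with $f$ outside $I$, and each inducing (with seed $y$) the exact same full query sequence $q(f,y)$ as $f$ does — because all responses, inside and outside, were chosen consistent with both. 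But then the learner's estimator $\widehat X$, being a function of the responses and $y$, is the same number for $\widetilde f_1$ and $\widetilde f_2$, and cannot be within $\epsilon/2$ of both $x_1$ and $x_2$; this contradicts $\epsilon$-accuracy. (One subtlety: "the same query sequence" requires that the responses $\widetilde f_i$ actually produces equal the ones the adversary handed out; since we chose sign answers and outside gradients to be simultaneously realizable by $\widetilde f_i$, this holds — and where $\widetilde f_i$ is non-differentiable at a query, we use the freedom to report the needed subgradient.) Hence $\phi$ must submit at least $\log(|I|/\epsilon)$ queries inside $I$; taking $\widetilde f := \widetilde f_1$ (the function realizing the adversarial run, which by construction has gradient equal to $f$ outside $I$) gives a function satisfying both conclusions (1) and (2), completing the proof.
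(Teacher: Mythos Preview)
Your high-level plan---run an adversary that always keeps the larger half of a surviving sub-interval of $I$, and then realize the adversarial transcript by a concrete $\widetilde f\in\mathcal F$---matches the paper's, but your execution has a genuine gap at the realization step. In this model the oracle returns a \emph{full gradient value}, not just a sign. When you write that the adversary ``returns the sign of the gradient'' and later that $\widetilde f$ ``is consistent with every response given along the run'' because the inside responses ``match the adversary's sign choices,'' you are implicitly slipping into the binary-feedback model. With full-gradient feedback the learner's next query depends on the exact number returned, so a function whose gradient merely has the correct sign at the inside queries will in general induce a \emph{different} query sequence. Your single invocation of Assumption~\ref{assump:complexity} with the whole interval $I$ only pins down $\widetilde f'$ outside $I$; it gives no control whatsoever over $\widetilde f'(q)$ at the inside queries, so you cannot conclude that $q(\widetilde f,y)$ coincides with the adversarial transcript (nor with $q(f,y)$, which is a different run again). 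The subgradient freedom you invoke at the end does not help, since Assumption~\ref{assump:complexity} does not let you place non-differentiability points at will.

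The paper closes this gap by building the adversary's function \emph{iteratively}. Along with the surviving interval $\Delta_i$ one maintains an actual function $g_i\in\mathcal F$ minimized at the midpoint of $\Delta_i$, with $g_0'=f'$ outside $I$. The $(i{+}1)$st inside query is read off the genuine run of $\phi$ on $g_i$; then Assumption~\ref{assump:complexity} is applied with the \emph{current} interval $\Delta_i$ (not $I$) to produce $g_{i+1}$ whose gradient agrees with $g_i$ outside $\Delta_i$. Since the first $i$ inside queries all lie outside $\Delta_i$, $g_{i+1}$ matches $g_i$ at every one of them---exact gradient values, not just signs---so the run of $\phi$ on $g_{i+1}$ reproduces those queries verbatim, and the induction continues. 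When the process terminates at some $g_K$ whose run contains exactly $K$ inside queries, $|\Delta_K|\ge |I|/2^K$ and no query lies in $\Delta_K$, so $\epsilon$-accuracy forces $K\ge\log(|I|/\epsilon)$; taking $\widetilde f=g_K$ gives both conclusions. Your argument becomes correct once the single end-of-run appeal to Assumption~\ref{assump:complexity} is replaced by this nested application on the shrinking intervals $\Delta_0\supseteq\Delta_1\supseteq\cdots$.
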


Next, we prove the lower bound in Theorem~\ref{thm:minimax} assuming correctness of Lemma~\ref{lmm:complexity}. The proof of Lemma~\ref{lmm:complexity} is deferred to the end of this subsection.

\begin{proof}[Proof of the lower bound in Theorem~\ref{thm:minimax}]

A key step in this proof is to connect definition of $(\delta,L)$-privacy with the covering numbers of the information sets. We claim that for a strategy to be $(\delta,L)$-private in the minimax sense, there must be one information set with a large covering number.



Let $\phi$ be a querying strategy that is both $\epsilon$-accurate and $(\delta,L)$-private. Define the information set of a query sequence $q$ as 
\[
\mathcal{I}(q)=\left\{x\in [0,1]: \exists f\in \mathcal{F}\text{ and } y,\;\;s.t.\;\; x=\arg\min f,\text{ and }q(f,y)=q\right\}.
\]
Denote the $\delta/2$-covering number of $\mathcal{I}(q)$ as $N_c(\mathcal{I}(q),\delta/2)$. 
Fix the adversary's strategy to be one that samples uniformly from a $\delta$-covering set of $\mathcal{I}(q)$. 
Since $\phi$ is $(\delta,L)$-private, 
there must exist some $f$ minimized at $x$, for which
\[
1/L> \mathbb{P}_f\left\{\left|\widetilde{X}-x\right|\leq \delta/2\right\} = \mathbb{E}\left[\mathbb{P}_{f}\left\{\left|\widetilde{X}-x\right|\leq \delta/2 \;\Big\rvert\; q\right\}\right],
\]
where the first integration is over $q$ and the second is over the randomness from the adversary's estimation scheme conditional on $q$. Since $x$ is in $\mathcal{I}(q)$, it must be $\delta/2$-close to at least one of the points in the covering set. Therefore for all $q$,
\[
\mathbb{P}_{f}\left\{\left|\widetilde{X}-x\right|\leq \delta/2\;\Big\rvert\; q\right\}\geq \frac{1}{N_c(\mathcal{I}(q), \delta/2)}.
\]
Taking expected value over $q$ on both sides, we have $\mathbb{E}(1/N_c(\mathcal{I}(q),\delta/2))<1/L$. Hence there must exist some query sequence $\bar{q}$ for which $N_c(\mathcal{I}(\bar{q}),\delta/2)>L$. As a result, $\mathcal{I}(\bar{q})$ contains $L$ points $x_1,...,x_L$ that are at least $\delta/2$-apart. 

By definition of  the information set, there exist $f_1,...,f_L\in\mathcal{F}$ and $y_1,...,y_L\in[0,1]$, such that $f_i$ is minimized at $x_i$, and $q(f_i,y_i)=\bar{q}$ for all $i$. 
Notice that for each $i$, $\bar{q}$ must contain a pair of queries at most $\epsilon$-apart that sandwiches $x_i$. Otherwise suppose the closest pair of queries in $\bar{q}$ that contains $x_i$ forms an interval $I$ of size larger than $\epsilon$. Under Assumption~\ref{assump:complexity}, for each $x\in I$, there exists $f\in \mathcal{F}$ for which $f$ is minimized at $x$ and $q(f,y_i)$ is also $\bar{q}$. By taking $x$ to be arbitrarily close to the endpoints of $I$, the $\epsilon$-accuracy requirement is violated since no estimator $\widehat{X}$ can ensure $|\widehat{X}-x|\leq \epsilon/2$ for all $x\in I$. Therefore, the length of $I$ is at most $\epsilon$. Combined with the fact that $x_1,...,x_L$ are $\delta$-separated, and the assumption $\delta\geq 2\epsilon$, we have shown that $\bar{q}$ contains $L$ pairs of distinct queries. Thus the optimal query complexity is lower bounded by $2L$.

To improve the lower bound to the desired $2L+\log(\delta/\epsilon)$, we would like to argue that aside from the $L$ pairs queries in $\bar{q}$, the learner must submit enough queries elsewhere to search for $X^*$ in order to fulfill the accuracy requirement. Indeed, the worst-case query complexity is lower bounded by $\log(1/\epsilon)$ for any strategy that is $\epsilon$-accurate. However, the worst-case instance may not be one of $f_1,...,f_L$. To combine the $2L$ queries used to ensure privacy with the queries used to ensure accuracy therefore becomes the main challenge of the lower bound proof. To address this difficulty, we will again utilize Assumption~\ref{assump:complexity} on the richness of $\mathcal{F}$. On a high level, Assumption~\ref{assump:complexity} allows us to find a large class of functions in $\mathcal{F}$ which can also lead to the query sequence $\bar{q}$. Out of these functions, we show that for at least one of them it takes $\log(\delta/\epsilon)$ extra queries to search for its minimizer. Next we give the rigorous proof of the existence of such a function.

Firstly, note that $\bar{q}$ contains $L$ pairs of $\epsilon$-close queries that sandwich $x_1,...,x_L$. Since $\delta\geq \epsilon$, we have that for all $i$, $\bar{q}$ contains at least one query in $[x_i-\delta/2]$, and one query in $[x_i+\delta/2]$. 
Once at least one query has appeared in each of $[x_i-\delta/2,x_i]$ and $[x_i,x_i+\delta/2]$, we say $x_i$ is ``$\delta/2$-localized". Let $x_j$ be the last one to be $\delta/2$-localized out of $x_1,...,x_L$, and suppose it is $\delta/2$-localized at time $T$. Without loss of generality, assume a query in $[x_j-\delta/2,x_j]$ appears first, so that $\bar{q}_T\in [x_j,x_j+\delta/2]$. 
Let $I=[a,b]$ with $a$ defined as the query in $\bar{q}_1,...,\bar{q}_T$ to the left of $x_j$ that is the closest to $x_j$, and $b=x_j+\delta/2$. See Figure~\ref{fig:minimax.lower} for an illustration.

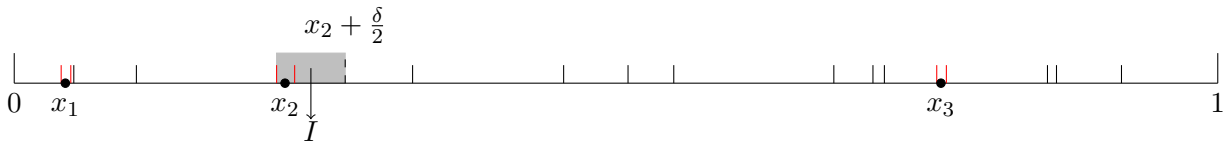
\begin{figure}[ht]
\begin{center}
   \begin{tikzpicture}[scale=0.8]
       \draw[fill=gray!50!white,draw=gray!50!white] (-5.64,0) rectangle ++(1.14,0.5);
    \draw (-10,0)-- (10,0); 
    \draw (-10,0.5)--(-10,0) node[below] {0};
    \draw (10,0.5)--(10,0) node[below] {1};
    \foreach \x in {-9.01,7.17,4.27,7.32,-7.97,0.2,8.4,-3.38,-0.87,4.46,3.62,0.96} {
        \draw (\x,0.3) -- (\x,0) node[below] {};
    }
    \foreach \x in {-9.22,-9.06,-5.64,-5.34,5.33,5.49} {
    	\draw[color=red] (\x,0.3) -- (\x,0) node[below] {};
    }
    \foreach \x in {-9.15,-5.5,5.4} {
    	\draw[fill=black] (\x,0) circle (0.07);
    }
    \node at (-9.15,-0.4) {$x_1$};
    \node at (-5.5,-0.4) {$x_2$};
    \node at (5.4,-0.4) {$x_3$};
    \draw[dashed] (-4.5,0) -- (-4.5,0.5) node[above] {$x_2+\frac{\delta}{2}$};
    \draw [->] (-5.07,0.25) -- (-5.07,-0.6);
    \node at (-5.07,-0.8) {$I$};
    \end{tikzpicture}
    \end{center}
    \caption{An illustration of the lower bound argument with $L=3$. The ticks represent all queries in $\bar{q}$. The $L$ pairs of $\epsilon$-close queries that sandwich $x_1,...,x_L$ are colored red. Suppose $x_2$ is the last one out of $x_1,...,x_L$ to be $\delta/2$-localized, and the query in $[x_2-\delta/2,x_2]$ appears before the one in $[x_2,x_2+\delta/2]$, then $I$ is defined as the shaded interval. Note that until all of $x_1,...,x_L$ are $\delta/2$-localized, no query is submitted in $I$.}
    \label{fig:minimax.lower}
   \end{figure}

Apply Lemma~\ref{lmm:complexity} with $I=[a,b]$, $f=f_j$ and $y=y_j$. We can find some $\widetilde{f}\in\mathcal{F}$ that satisfies the two criteria in the statement of Lemma~\ref{lmm:complexity}. Criterion (2) ensures that the gradient of $\widetilde{f}$ and $f_j$ coincide outside of $I$. Since $x_j$ is $\delta/2$-localized at time $T$, $\bar{q}_1,...,\bar{q}_{T-1}$ do not contain any queries between $a$ and $b$. Thus $q(\widetilde{f},y_j)$ and $q(f_j,y_j)=\bar{q}$ agree completely up to time $T-1$, and contain at least the $2L-1$ queries outside of $I$ used to sandwich $x_1,...,x_L$. The reason we need to subtract 1 is because the $T$'th queries in $\bar{q}$ is in $I$.


By criterion (1) in the statement of Lemma~\ref{lmm:complexity}, $q(\widetilde{f},y_j)$ contains at least $\log (|I|/\epsilon)\geq \log(\delta/(2\epsilon))$ queries in $I$. Combined with the $2L-1$ queries outside of $I$, we arrive at the desired lower bound $2L+\log(\delta/\epsilon)-2$.

\end{proof}

\medskip

\begin{proof}[Proof of Lemma~\ref{lmm:complexity}.]
The lemma is proved by constructing an $\widetilde{f}$ that satisfies both criteria. Our construction scheme in inspired by that of Nemirovski's (See Section 2.1.2 in lecture notes by Iouditski~\cite{iouditski2007efficient}). 
With the querying strategy $\phi$ fixed, we construct a sequence of functions $\{g_i\}_{i\geq 0}\subset \mathcal{F}$ adapted to the queries and the responses. The construction ensures that for each $i\geq 0$, there is an interval $\Delta_i\subset I$ with $|\Delta_i|\geq |I|/2^i$, such that 
\begin{enumerate}
\item  $g_i$ is minimized at the midpoint of $\Delta_i$;
\item in the query sequence $q(g_i,y)$, the first $i$ queries in $I$ are outside of $\Delta_i$.
\end{enumerate}


By Assumption~\ref{assump:complexity}, there exists a function in $\mathcal{F}$ whose gradient of $f$ agrees with that of $f$ outside of $I$, and is minimized at the midpoint of $I$. Let this function be $g_0$ and let $\Delta_0=I$. 

Inductively construct the rest of $\{g_i\}$. Given $g_0,...,g_{i}$, by the induction hypothesis in $q(g_i,y)$, the first $i$ queries in $I$ are all outside of $\Delta_{i}=[a_{i},b_{i}]$. Let $q$ be the $(i+1)$'th query of $q(g_{i},y)$ in $I$. If $q$ is not in $\Delta_{i}$, then we can simply let $g_{i+1}=g_{i}$ and $\Delta_{i+1}=\Delta_{i}$ to complete the $(i+1)$'th step of the induction. If $q\in\Delta_{i}$, depending on whether $q$ lands to the left or right of the midpoint of $\Delta_i$, let $\Delta_{i+1}$ be either $[q,b_i]$ or $[a_i,q]$, so that $|\Delta_{i+1}|\geq |\Delta_i|/2$.
Let $g_{i+1}\in \mathcal{F}$ be a function whose gradient agrees with $g_i$ outside of $\Delta_i$, and is minimized at the midpoint of $\Delta_{i+1}$. By Assumption~\ref{assump:complexity} such a $g_{i+1}$ always exists.

The construction can be carried out until for some integer $K$, we cannot find the $(K+1)$'th query of $q(g_K,y)$ in $I$. That is, $q(g_K,y)$ contains only $K$ queries in $I$. By construction, $q(g_K,y)$ does not contain any queries in $\Delta_K$. Therefore under Assumption~\ref{assump:complexity}, the learner cannot rule out any member of $\Delta_K$ being $X^*$. For the strategy to be $\epsilon$-accurate, we must have $|\Delta_K|<\epsilon$; hence $K>\log(|I|/\epsilon)$. Taking $\widetilde{f}=g_K$ finishes the proof of the lemma.
\end{proof}

\subsection{Proof under the Bayesian Setting}\label{sec:proof.Bayes}

\begin{proof}[Proof of the upper bound in Theorem~\ref{thm:bayes}]

Let $\nu$ denote the distribution of $X^*$. For an interval $I\subset[0,1]$, write $\nu_I$ for the probability distribution of $\nu$ conditioned on $I$, {\it i.e.}, $\frac{d\nu_I}{d\nu}(x)=\mathds{1}\{x\in I\}/\nu(I)$.
We design the following multi-phase querying strategy to attain the desired upper bound.

\begin{algorithm}[H]
\caption{Querying Strategy under the Bayesian Setting}
\label{algo:bayes}
\begin{algorithmic}[1]
\STATE 
Recursively query the median of the posterior distribution of $X^*$, until it is supported on an interval I with $\nu(I)\in [2\delta LH_\alpha, 4\delta LH_\alpha]$. 

\STATE 
Let $\kappa_j$ be the $j/L$ quantile of $\nu_I$ for $j = 0,1,...,L$
and let $I_{j}=[\kappa_{j-1}, \kappa_{j}]$ for $j \in [L]$. 
Query $\kappa_1,...,\kappa_{L-1}$ 
and identify $j^*$ for 
$f'(\kappa_{j^*-1}) \le 0$ and $f'(\kappa_{j^*}) > 0$ so that $I_{j^*}$ contains $X^*$.

\STATE 
Query the median $m_j$ of $\nu_{I_j}$ for $j\in [L]$. If $f'(m_{j^*})>0$, let $J_j=[\kappa_{j-1},m_j]$ for all $j$; otherwise let $J_j=[m_j,\kappa_j]$. 

\STATE 
For all $j\neq j^*$, sample $X_j\sim \nu_{J_j}$ independently. Denote $X_{j^*}=X^*$. For $j=1,...,L$, run the regular bisection search on $J_j$ to locate $X_j$ up to $\epsilon$-accuracy.
\end{algorithmic}
\end{algorithm}


Phase 1 runs the median-based bisection search, 
which is equivalent to the regular bisection search on $U=F_\nu(X^*)\sim \text{Unif}[0,1]$, where $F_\nu$ is the CDF of $\nu$. 
Note that this step is always possible under the assumption $2\delta L H_\alpha\leq 1$.
Phase 2 divides $I$ into $L$ subintervals  $I_1,...,I_L$ with equal $\nu$-probability and determines $I_{j^*}$ containing $X^*$.
Phase 3 is the key to ensure adequate separation between the subintervals $\{J_j\}_{j \in [L]}$. Phase 4 serves to achieve
the $\epsilon$-accuracy while obfuscating the adversary. See Figure~\ref{fig:bayes.strategy} for an illustration of phases 2 to 4.

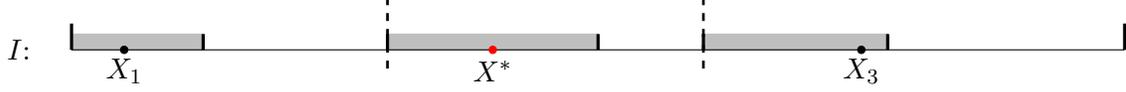
\begin{figure}
\begin{center}
   \begin{tikzpicture}[scale=0.7]
       \draw[fill=gray!50!white,draw=gray!50!white] (-10,0) rectangle ++(2.5,0.3);
       \draw[fill=gray!50!white,draw=gray!50!white] (-4,0) rectangle ++(4,0.3);
       \draw[fill=gray!50!white,draw=gray!50!white] (2,0) rectangle ++(3.5,0.3);
    \draw (-10,0)-- (10,0); 
    \draw[line width = 1.2] (-10,0.5)--(-10,0) node[below] {};
    \draw[line width = 1.2] (10,0.5)--(10,0) node[below] {};
    \draw[dashed, line width = 1] (-4,1)--(-4,-0.5) {};
     \draw[dashed, line width = 1] (2,1)--(2,-0.5) {};
    \foreach \x in {-4,2, -7.5,0,5.5} {
        \draw[line width = 1.2] (\x,0.3) -- (\x,0) node[below] {};
    }
    \draw[fill=black] (-9,0) circle (0.07);
    \draw[fill=black] (5,0) circle (0.07);
    \draw[fill=red,draw=red] (-2,0) circle (0.07);
    \node at (-9,-0.4) {$X_1$};
    \node at (-2,-0.4) {$X^*$};
    \node at (5,-0.4) {$X_3$};
    \node at (-11,0) {$I$:};
    \end{tikzpicture}
    \end{center}
    \caption{Example of phases 2 to 4 of the querying strategy under the Bayesian setting with $L=3$. In phase 2, the learner queries the $1/3$ and $2/3$ quantile of $\nu_I$ (represented by the dashed lines), and learns that $X^*\in I_2$. In phase 3, she queries the medians $m_1$,...$m_L$, and learners that $X^*$ is to the left of $m_2$. Therefore $J_1$,...,$J_L$ are defined to be the shaded intervals. In phase 4, $X_1$ and $X_3$ are sampled from $\nu_{J_1}$ and $\nu_{J_3}$ respectively and $X_2$ is defined to be $X^*$. Note that the separation of $X_1$,...,$X_L$ are guaranteed by the separation of $J_1$,...,$J_L$.}
    \label{fig:bayes.strategy}
   \end{figure}

The querying strategy outlined in Algorithm \ref{algo:bayes} is clearly $\epsilon$-accurate by design. We now show that it is also $(\delta,L)$-private. The high-level proof idea is to consider an adversary who has access to $X_1,...,X_L$. Using a genie-aided argument, we argue that this adversary is stronger than the one who only has access to the query sequence. We then establish that the conditional distribution of $X^*$ given $X_1,...,X_L$ is uniform on the $X_j$'s. Moreover, phase 3 of the querying strategy ensures that the $X_j$'s are all $\delta$-separated. Therefore  even with the additional knowledge of $X_1,...,X_L$, the adversary cannot estimate $X^*$ accurately with probability higher than $1/L$.

\medskip 
{\bf Proof of Privacy:} 
Since the adversary only has access to the query sequence $q$, any 
adversary's estimator $\widetilde{X}$ must be a (random) function of $q$,
that is $\wt X \equiv \wt X(q)$. 
Meanwhile by the design of our querying strategy, $q$ can be completely reconstructed from $X_1,...,X_L$. To see that, note that $I, \{I_j\},\{J_j\}$ and all the queries in phase 4 are deterministic functions of $X_1,...,X_L$. 
Therefore there is a mapping $\widetilde{\psi}$ such that $\wt X(q) = \widetilde{\psi}(X_1,...,X_L)$.
Thus,
\begin{align}
\mathbb{P}\left\{\left|\widetilde{X}-X^*\right|\leq \frac{\delta}{2}\right\}
& =\mathbb{E}\left[\mathbb{P}\left\{\left|\widetilde{X}(q)-X^*\right|\leq \frac{\delta}{2} \;\Big\rvert\;q \right\} \right] \nonumber \\ 
& \le\mathbb{E}\left[\sup_{\widetilde{\psi}}\mathbb{P}\left\{\left|\widetilde{\psi}(X_1,...,X_L)-X^*\right|\leq \frac{\delta}{2}\;\Big\rvert\; X_1,...,X_L\right\}\right] \nonumber \\
&\le  \mathbb{E}\left[\sup_{\tilde{x}\in [0,1]}\mathbb{P}\left\{\left|\tilde{x}-X^*\right|\leq \frac{\delta}{2}\;\Big\rvert\; X_1,...,X_L\right\}\right]. \label{eq:given.xj}
\end{align}

We claim that
\begin{enumerate}[(i)]
\item \label{enum:unif} $X^* \mid X_1,...,X_L\sim \text{Unif}\{X_1,...,X_L\}$.
\item \label{enum:delta.separated} With probability $1$, $|X_i-X_j|>\delta$ for all $i\neq j$. 

\end{enumerate}
Assuming the two claims hold,
$$
\sup_{\tilde{x}\in [0,1]}\mathbb{P}\left\{\left|\tilde{x}-X^*\right|\leq \frac{\delta}{2} \;\Big\rvert\; X_1,...,X_L \right\}
=\sup_{\tilde{x}\in [0,1]}\frac{1}{L}\sum_{j\leq L}\mathds{1}\left\{\left|\tilde{x}-X_j\right|\leq \frac{\delta}{2}\right\}\leq \frac{1}{L},
$$
where the equality is from~(\ref{enum:unif}) and the inequality is from~(\ref{enum:delta.separated}). Continuing~\eqref{eq:given.xj}, we have $\mathbb{P}\{|\widetilde{X}-X^*|\leq \delta/2\}\leq 1/L$. Thus our strategy is $(\delta,L)$-private. It remains to prove claims (\ref{enum:unif}), (\ref{enum:delta.separated}).

\smallskip

{\it Proof of (\ref{enum:unif}):} Recall that the index of the subinterval containing $X^*$ is $j^*$. Since $\nu(I_j)$ are equal for all $j$, $j^*$ is distributed uniformly in $\{1,...,L\}$. Therefore the desired claim $X^*\mid X_1,...,X_L\sim\text{Unif}\{X_1,...,X_L\}$ is equivalent to $j^*$ and $(X_1,...,X_L)$ being independent.

To show $j^*\indep (X_1,...,X_L)$, first note that $j^*\indep (J_1,...,J_L)$, because conditional on $j^*$, either $J_j=[\kappa_{j-1},m_j]$ for all $j$ 
or $J_j=[m_j,\kappa_j]$ for all $j$, with equal probability. 
Second, conditional on $(J_1, \ldots, J_L)$, $X_j$'s 
are independently distributed according to $\nu_{J_j}$ across all $j$. 
Therefore, we arrive at the conclusion $j^*\indep (X_1,...,X_L)$.

%
%
%

\smallskip

{\it Proof of (\ref{enum:delta.separated}):} It suffices to show that the intervals $J_1,...,J_L$ are $\delta$-separated, or equivalently, $|I_j\backslash J_j|\geq \delta$ for all $j\leq L$. Since phase 2 of the querying strategies queries all the medians of $I_1,...,I_L$, we have $\nu(I_j\backslash J_j)=\nu (I_j)/2=\nu(I)/(2L)\geq \delta H_\alpha$. Let $\mathbf{m}=d\nu/d\lambda$ be the density of $\nu$. Then
\begin{equation}
\label{eq:length.lower}
\left|I_j\backslash J_j\right|\geq \frac{\nu\left(I_j\backslash J_j\right)}{\sup_t \mathbf{m}(t)}=\frac{\delta H_\alpha}{\sup_t \mathbf{m}(t)}.
\end{equation}
To finish proof of this claim, we only need to bound the density of $\nu$ from above. Recall that $\nu$ is the distribution of $X^*$, which is the median of $F$. Thus the distribution function of $\nu$ has the form
\[
\nu([0,t])=\mathbb{P}\left\{X^*\leq t\right\}=\mathbb{P}\left\{F(t)\geq 1/2\right\}.
\]
Since $F\sim \text{DP}(\alpha,\lambda_{[0,1]})$, we have $(F(t),1-F(t))\sim \Dir(\alpha t,\alpha(1-t))$. Therefore $F(t)\sim \Beta(\alpha t,\alpha(1-t))$. We will use the following Lemma~\ref{lmm:beta} to bound the density of $\nu$. The proof of Lemma~\ref{lmm:beta} is deferred to the end of this subsection.
\begin{lemma}
\label{lmm:beta}
Suppose $X\sim \Beta(\alpha t,\alpha(1-t))$ for some $\alpha>0$, then for all $t\in (0,1)$,
\[
h_\alpha \leq \frac{d}{dt}\mathbb{P}\left\{X\geq 1/2\right\} \leq H_\alpha,
\]
where $h_\alpha = \tfrac{1}{3}2^{-\alpha-2}$ and $H_\alpha= (3+2e^{-1})\alpha+14$.
\end{lemma}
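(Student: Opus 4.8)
The plan is to compute the derivative $\frac{d}{dt}\pb\{X \geq 1/2\}$ explicitly, where $X \sim \Beta(\alpha t, \alpha(1-t))$. Writing $p(t) := \pb\{X \geq 1/2\} = \int_{1/2}^1 \frac{x^{\alpha t - 1}(1-x)^{\alpha(1-t)-1}}{B(\alpha t, \alpha(1-t))}\,dx$, we differentiate under the integral sign and also differentiate the normalizing Beta function $B(\alpha t, \alpha(1-t))$. Using $\frac{\partial}{\partial t} x^{\alpha t - 1} = \alpha (\log x)\, x^{\alpha t - 1}$ and $\frac{\partial}{\partial t}(1-x)^{\alpha(1-t)-1} = -\alpha(\log(1-x))(1-x)^{\alpha(1-t)-1}$, and $\frac{d}{dt}\log B(\alpha t, \alpha(1-t)) = \alpha[\psi(\alpha t) - \psi(\alpha(1-t))]$ where $\psi$ is the digamma function, we obtain
\[
p'(t) = \alpha\, \mathbb{E}\!\left[\big(\log X - \log(1-X) - \psi(\alpha t) + \psi(\alpha(1-t))\big)\mathds{1}\{X \geq 1/2\}\right].
\]
By the symmetry $x \mapsto 1-x$ (which swaps the two parameters $\alpha t \leftrightarrow \alpha(1-t)$), it is cleanest to evaluate at a general $t$ without assuming $t = 1/2$; the integrand $\log x - \log(1-x)$ is nonnegative on $[1/2,1]$, but the centering terms $-\psi(\alpha t) + \psi(\alpha(1-t))$ can have either sign, so positivity of $p'$ is not immediate and must be argued.

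The key steps, in order: (1) establish the integral formula for $p'(t)$ above; (2) for the \emph{lower bound} $p'(t) \geq h_\alpha$, restrict the expectation to the event $\{X \geq 2/3\}$ (say) where $\log x - \log(1-x) \geq \log 2$ is bounded below by an absolute constant, bound $\pb\{X \geq 2/3\}$ from below uniformly in $t$ using crude estimates on the Beta density (here the factor $2^{-\alpha}$ will naturally appear, since the mass the Beta distribution places near the endpoints degrades like $2^{-\Theta(\alpha)}$ in the worst case $t \to 0$ or $t \to 1$), and separately control the contribution of the digamma centering terms — the cleanest route is to note that $\mathbb{E}[\log X - \log(1-X)] = \psi(\alpha t) - \psi(\alpha(1-t))$ exactly, so that $p'(t) = \alpha\,\mathrm{Cov}(\log X - \log(1-X),\, \mathds{1}\{X \geq 1/2\})$, and this covariance is strictly positive because $\log x - \log(1-x)$ is strictly increasing, then quantify it; (3) for the \emph{upper bound} $p'(t) \leq H_\alpha$, bound $|\log X - \log(1-X) - \psi(\alpha t) + \psi(\alpha(1-t))|$ in expectation, splitting $[1/2,1]$ into a bulk region and a region near $x=1$ where $\log(1-x)$ blows up; the near-$1$ contribution is controlled because the Beta density carries a factor $(1-x)^{\alpha(1-t)-1}$ and $\int_0^{1/2} u^{\alpha(1-t)-1}\log(1/u)\,du$ is finite and, after dividing by $B(\alpha t,\alpha(1-t))$, contributes an amount linear in $\alpha$ (plus absolute constants), yielding the stated form $(3+2e^{-1})\alpha + 14$.

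I expect the main obstacle to be the lower bound, specifically obtaining a \emph{clean, uniform-in-$t$} positive constant. The covariance representation $p'(t) = \alpha\,\mathrm{Cov}(g(X), \mathds{1}\{X\geq 1/2\})$ with $g(x) = \log\frac{x}{1-x}$ guarantees positivity for free, but extracting an explicit constant like $\tfrac{1}{3}2^{-\alpha-2}$ requires a quantitative anti-concentration / correlation estimate: one must show the Beta distribution does not collapse entirely onto one side of $1/2$, and that $g$ genuinely separates the two halves. The worst case is $\alpha$ small with $t$ near $0$ or $1$, where $\Beta(\alpha t, \alpha(1-t))$ is nearly a two-point mass on $\{0,1\}$ — there the covariance is still bounded below because even a tiny mass $\asymp 2^{-\alpha}$ near the minority endpoint contributes, through the unbounded $g$, an $\Omega(1)$ amount after multiplication; making this rigorous with the specific constants, rather than just $\Omega_\alpha(1)$, is the delicate part. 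The upper bound, by contrast, is a routine (if slightly tedious) exercise in bounding Beta moments of $\log$ against digamma values.
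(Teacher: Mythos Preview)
Your derivative formula and the covariance representation
\[
p'(t)=\alpha\,\mathrm{Cov}\!\left(\log\tfrac{X}{1-X},\,\mathds{1}\{X\geq 1/2\}\right)
=\alpha\left[\mathbb{E}\!\left(\mathds{1}\{X\geq 1/2\}\log\tfrac{X}{1-X}\right)-\mathbb{P}\{X\geq 1/2\}\,\mathbb{E}\log\tfrac{X}{1-X}\right]
\]
are exactly the decomposition the paper uses, and your observation that positivity is automatic (since $x\mapsto\log\frac{x}{1-x}$ is increasing) is a clean way to see it.

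There is, however, a genuine gap in your lower-bound plan. You propose to ``bound $\mathbb{P}\{X\geq 2/3\}$ from below uniformly in $t$,'' but this quantity is \emph{not} bounded away from zero: for $X\sim\Beta(\alpha t,\alpha(1-t))$ one has $\mathbb{P}\{X\geq 2/3\}=\Theta(t)$ as $t\to 0$ (the numerator $\int_{2/3}^1 x^{\alpha t-1}(1-x)^{\alpha(1-t)-1}dx$ stays bounded while $B(\alpha t,\alpha(1-t))\sim 1/(\alpha t)$). So no restriction of the form $\{X\geq c\}$ with $c>1/2$ can give a $t$-free lower bound on its own. The paper's device is to keep the $t$-dependence in \emph{both} terms of the covariance and then balance them: it shows
\[
\alpha\,\mathbb{E}\!\left(\mathds{1}\{X\geq 1/2\}\log\tfrac{X}{1-X}\right)\geq 2^{-\alpha-2}\,t,
\qquad
-\alpha\,\mathbb{P}\{X\geq 1/2\}\,\mathbb{E}\log\tfrac{X}{1-X}\geq 2^{-\alpha-2}\Big(\tfrac12-\tfrac{t}{1-t}\Big)_+,
\]
the second via the digamma bounds $\tfrac{1}{2z}<\log z-\psi(z)<\tfrac{1}{z}$ together with $\mathbb{P}\{X\geq 1/2\}\geq 2^{-\alpha-2}t$. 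Adding gives $p'(t)\geq 2^{-\alpha-2}\big(t+(\tfrac12-\tfrac{t}{1-t})_+\big)$, and the elementary minimum of $t+(\tfrac12-\tfrac{t}{1-t})_+$ over $t\in(0,1/2]$ is $1/3$ (attained at $t=1/3$), yielding $h_\alpha=\tfrac13\,2^{-\alpha-2}$. This two-term, $t$-dependent balancing is the missing idea in your plan.

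For the upper bound your bulk/near-$1$ splitting is a viable alternative, though the paper's route is somewhat different: it writes $\mathbb{E}[\mathds{1}\{X\geq 1/2\}\log\frac{X}{1-X}]=\int_0^\infty \mathbb{P}\{X\geq e^s/(1+e^s)\}\,ds$ via Tonelli, bounds the tail by Markov's inequality on $1/(1-X)$ when $\alpha>4$ and by a direct density estimate when $\alpha\leq 4$ (this is where the constant $12$ comes from), and handles the second term using $\mathbb{P}\{X\geq 1/2\}\leq 2t$ together with the same digamma inequalities; combining yields $H_\alpha=(3+2e^{-1})\alpha+14$. Either approach should close, but the paper's avoids an explicit split of $[1/2,1]$.
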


By Lemma~\ref{lmm:beta},
\begin{equation}
\label{eq:mu.pdf}
\mathbf{m}(t)=\frac{d}{dt}\mathbb{P}\left\{F(t)\geq 1/2\right\}\leq H_\alpha,
\end{equation}
for all $t\in [0,1]$. Combining~\eqref{eq:length.lower} and~\eqref{eq:mu.pdf} yields that
\[
\left|I_j\backslash J_j\right|\geq \frac{\delta H_\alpha}{H_\alpha}\geq \delta.
\]
{We have shown that $\nu_{j_1}$, ... $\nu_{j_L}$ are continuous distributions supported on $L$ intervals that are $\delta$-separated from each other. Therefore $|X_i-X_j|>\delta$ for all $i\neq j$ with probability $1$.}

\smallskip

{\bf Query Complexity:} The number of queries submitted in phase 1 is at most $\log (1/(2\delta LH_\alpha))$. Phase 2 and phase 3 involve $L-1$ and $L$ queries respectively. The number of queries submitted in phase 4 equals
\[
\sum_{j\leq L}\left\lceil\log\frac{|J_j|}{\epsilon}\right\rceil\leq L+\sum_{j\leq L}\log\frac{|J_j|}{\epsilon} = L+\log\left(\prod_{j\leq L}|J_j|\right)+L\log\frac{1}{\epsilon},
\]
To bound the above, note that from Lemma~\ref{lmm:beta} we have
\[
\sum_{j\leq L}|J_j|\leq \frac{\nu(\cup_{j\leq L}J_j)}{h_\alpha} \leq \frac{2\delta L H_\alpha}{h_\alpha}.
\]
Therefore $\prod_{j\leq L}|J_j|\leq (2\delta H_\alpha/h_\alpha)^L$. Thus the total number of queries submitted by the learner is at most
\begin{align*}
&\log\frac{1}{2\delta L H_\alpha}+(L-1)+L+L\left(\log\frac{\delta}{\epsilon} + \log\frac{4H_\alpha}{h_\alpha}\right)\\
=& L\left(\log\frac{\delta}{\epsilon}+\log\frac{16H_\alpha}{h_\alpha}\right) +\log\frac{1}{\delta L}+\log\frac{1}{4H\alpha}\\
\leq & L\left(\log\frac{\delta}{\epsilon} + c_2\right)+ \log\frac{1}{\delta L}
\end{align*}
for $c_2=\log(16H_\alpha/h_\alpha)$. The inequality is from $H_\alpha>14$ for all $\alpha>0$. 
\end{proof}

\medskip

\begin{proof}[Proof of the lower bound in Theorem~\ref{thm:bayes}]
Let $\phi$ be a querying strategy that is both $\epsilon$-accurate and $(\epsilon,L)$-private. By definition of $(\epsilon,L)$-privacy, we must have for any adversary's estimator $\widetilde{X}$,
\begin{equation*}
\frac{1}{L}\geq \mathbb{P}\left\{\widetilde{X}\in [X^*-\delta/2,X^*+\delta/2]\right\}.
\end{equation*}
For the purpose of the lower bound, we can assume without loss of generality that the learner always submits a fixed $n$ number of queries under strategy $\phi$. If the lengths of the query sequences $q(f^*,Y)$ depend on $f^*$ and $Y$, the learner can always fill the short sequences with $n-|q(f^*,Y)|$ trivial queries at 0 without hurting the accuracy or the privacy of learning.

Next we complete the lower bound proof following the outline given in Section~\ref{sec:proof.overview}.

{\bf Step 1:} Quantify the learner's information using learner's intervals. Recall that the $i$'th learner's interval $I_i$ denotes the smallest interval that the learner knows to contain $X^*$. 

{\bf Step 2:} Analyze the conditional distribution of $X^*$ over the learner's interval. To find a ``good" event $\mathcal{B}$ on which the conditional distribution is uniform, we heavily rely on the stick-breaking characterization of the Dirichlet Process. Namely, the event $\mathcal{B}$ is associated with the length of the longest stick in the stick-breaking process. For completeness, we shall include a brief description of the stick-breaking process here. 

Given base distribution $\mu_0$ and scaling parameter $\alpha>0$, draw $\{X_k\}_{k=1}^\infty$ i.i.d.\ from $\mu_0$, and independently draw $\{V_k\}_{k=1}^\infty$ i.i.d.~from $\Beta(1,\alpha)$. From a stick of unit length, break off the first stick of length $V_1$; break off $V_2$ fraction of the remaining stick and repeat. In other words, denote by $\beta_k$ the length of the $k$'th stick. We have
\[
\beta_k=V_k\cdot \prod_{j\leq k-1}\left(1-V_k \right)
\]
and $\sum_{k=1}^\infty \beta_k=1$. 
Let $\mu=\sum_{k\geq 1}\beta_k \delta_{X_k}$ be the discrete distribution supported on $\{X_k\}_{k=1}^\infty$, where $\delta_{X_k}$ denotes the point mass distribution at $X_k$. 
Then $\mu$ with the distribution function of $F$ follows the Dirichlet process $\text{DP}(\mu_0,\alpha)$.

Here is a heuristic argument on how the stick-breaking process helps us prove the uniformity of the conditional distribution of $X^*$. Under our prior construction, $X^*$ is at the median of $F\sim \text{DP}(\lambda_{[0,1]},\alpha)$, where we recall that $\lambda_{[0,1]}$ is the Lebesgue measure on $[0,1]$.
Therefore, $X^*$ occurs at one of the stick-breaking locations $X_k$. Even though the $X_k$'s are distributed {\it i.i.d.} uniformly in $[0,1]$, $X^*$ itself does not follow the uniform distribution since the index $i$ that corresponds to $X^*$ is random. The key observation is that the conditional distribution of $X^*$ is uniform conditional on the event $\mathcal{A}$ where the length of the longest stick is at least $1/2$. To prove uniformity, we first show that on the event $\mathcal{A}$, the median $X^*$ must occur at the $X_k$ that corresponds to the longest stick. Moreover, by independence of the stick lengths $\{\beta_k\}_{k\geq 1}$ and the locations $\{X_k\}_{k\geq 1}$, the distribution of the location corresponding to the longest stick is uniform in $[0,1]$. Furthermore, the posterior distribution of $X^*$ remains uniform as queries are sequentially submitted. The following Lemma~\ref{lmm:uniform} contains the precise statement on uniformity. 

Some notation is necessary before  stating Lemma~\ref{lmm:uniform}. Firstly, denote by $\beta_{(1)}, \beta_{(2)},...$ the order statistics of the lengths of the sticks in the stick-breaking process corresponding to $F$. Let
\[
\mathcal{A}= \left\{\beta_{(1)}\geq 1/2\right\}=\cup_{z\geq 1/2} \mathcal{A}_z,\;\;\;\text{ where }\mathcal{A}_z=\left\{\beta_{(1)} = z\right\}.
\]
Let $J\subset [0,1]$ be an arbitrary fixed interval. Write $[q_-,q_+]=I_i\cap J$. Let the event $\mathcal{B}=\mathcal{B}(z,J,y,i,\rho^{(i)},\rho_-,\rho_+)$ encode the random instances of $F$, $Y$ and the first $i$ responses, defined as
\begin{equation}\label{eq:def.B}
\mathcal{B} = \left\{\mathcal{A}_z,\, X^*\in J, \, Y=y, \,  r^{(i)}=\rho^{(i)}, \, F(q_-)=\rho_-, \,F(q_+)=\rho_+\right\}.
\end{equation}
See Figure~\ref{fig:Bayes.lower} for an example of $F$ and the quantities in~\eqref{eq:def.B}. 

\begin{lemma}\label{lmm:uniform}
For all $z\geq 1/2,$ $J,$ $y,$ $i,$ $\rho^{(i)},$ $\rho_-<1/2,$ $\rho_+>1/2$, we have for $\mathcal{B}$ defined in~\eqref{eq:def.B},
\[
\mathcal{L}\left(X^*\mid \mathcal{B}\right)=\mathrm{Unif}[q_-,q_+],
\]
where $\mathcal{L}(\cdot)$ denotes the (conditional) distribution.
\end{lemma}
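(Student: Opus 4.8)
The plan is to exploit the stick-breaking representation of the Dirichlet process together with the self-similarity property of the Dirichlet process, conditioned on the event $\mathcal{A}_z$ that pins down the length $z\ge 1/2$ of the longest stick. First I would argue that on $\mathcal{A}_z$, the median of $F$ must land at the atom carrying the longest stick: since $z\ge 1/2$, any interval of $F$-mass exactly $1/2$ cannot avoid the heaviest atom, so $X^*=X_{k^\star}$ where $\beta_{k^\star}=z=\beta_{(1)}$. Crucially, the stick lengths $\{\beta_k\}$ are independent of the atom locations $\{X_k\}$, which are i.i.d.\ uniform on $[0,1]$; hence, before any further conditioning, $X^*=X_{k^\star}$ is uniform on $[0,1]$, because the index $k^\star$ is a function of the lengths alone and is therefore independent of the locations.

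Next I would incorporate the remaining conditioning in $\mathcal{B}$: the seed $Y=y$ (independent of everything relevant), the first $i$ responses $r^{(i)}=\rho^{(i)}$ (equivalently, the gradient values, which under the prior are determined by $F$ at the queried points), and the two CDF values $F(q_-)=\rho_-<1/2$, $F(q_+)=\rho_+>1/2$, plus the event $X^*\in J$. The key point is that conditioning on the values of $F$ at a finite set of points — here the past queries and the endpoints $q_-,q_+$ of $I_i\cap J$ — partitions $[0,1]$ into blocks, and by the self-similarity (a.k.a.\ the Sethuraman / renormalization) property of the Dirichlet process, the restriction of $F$ to each block, suitably rescaled, is again a Dirichlet process independent across blocks and independent of the boundary values, with base measure the (normalized) restriction of Lebesgue measure. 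Since $\rho_-<1/2<\rho_+$, the median $X^*$ lies strictly inside the block $[q_-,q_+]=I_i\cap J$; and the heaviest atom of $F$, which carries mass $z\ge 1/2 > \rho_+-\rho_-$ cannot lie outside this block, so it coincides with the heaviest atom of the rescaled Dirichlet process on $[q_-,q_+]$. Within that block the atom locations are (conditionally) i.i.d.\ uniform on $[q_-,q_+]$ and independent of the block's stick lengths, so the same index-independent-of-locations argument gives $\mathcal{L}(X^*\mid\mathcal{B})=\mathrm{Unif}[q_-,q_+]$. One still has to check that the event $X^*\in J$ does not bias the location: but $X^*$ being the heaviest atom of the block already forces $X^*\in[q_-,q_+]\subseteq J$ up to the learner's interval, so this conditioning is (on the event $\mathcal{A}_z$ intersected with the CDF constraints) either automatic or reduces to intersecting a uniform law with $J$, which is still uniform on $[q_-,q_+]=I_i\cap J$.

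I expect the main obstacle to be making the self-similarity / conditioning step fully rigorous: one must carefully state and apply the conditional law of a Dirichlet process given its values on a finite partition (the neutral-to-the-right / residual-allocation structure), verify that the conditioning on $\mathcal{A}_z$ — an event about the \emph{global} order statistics of stick lengths — interacts correctly with this decomposition (in particular that, given $\mathcal{A}_z$ and the boundary masses, the heaviest stick is still located in $[q_-,q_+]$ almost surely and is still the heaviest stick \emph{of that block}), and that the learner's interval $I_i$ and responses $\rho^{(i)}$, although adaptively chosen, only ever reveal $F$ at finitely many points so that the partition argument applies. Once the decomposition is in place, the conclusion follows from the elementary fact that a random pick whose index depends only on a collection of weights independent of an i.i.d.\ uniform sample of locations is itself uniform. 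A secondary technical point is handling the measure-zero subtleties (the $\rho_\pm$ are conditioned to exact values, so everything should be phrased via regular conditional distributions / disintegration), which I would dispatch by noting the relevant densities are continuous in $\rho_\pm$.
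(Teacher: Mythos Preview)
Your approach is essentially the same as the paper's: you correctly identify that on $\mathcal{A}_z$ the median must sit at the heaviest atom, invoke self-similarity to reduce to the block $[q_-,q_+]$, drop the conditioning on $Y$, the responses, and $\{X^*\in J\}$ (the latter being automatic once $\rho_-<1/2<\rho_+$), and then use independence of stick lengths and locations in the rescaled process. The paper carries out exactly this program, with the key bookkeeping step being the equivalence $\{\beta_{(1)}=z\}\Leftrightarrow\{\beta'_{(1)}=z/(\rho_+-\rho_-)\}$ for the rescaled sticks, which is precisely the obstacle you flag.

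One slip to fix: the inequality ``$z\ge 1/2 > \rho_+-\rho_-$'' is backwards (and in fact vacuous on $\mathcal{B}$, since the block must contain the heaviest atom and hence has mass $\rho_+-\rho_-\ge z$). The correct argument that the heaviest atom lies in $[q_-,q_+]$ is either the median argument you already gave, or the observation that each complementary piece has mass $\rho_-<1/2\le z$ and $1-\rho_+<1/2\le z$ respectively, so neither can contain an atom of mass $z$.
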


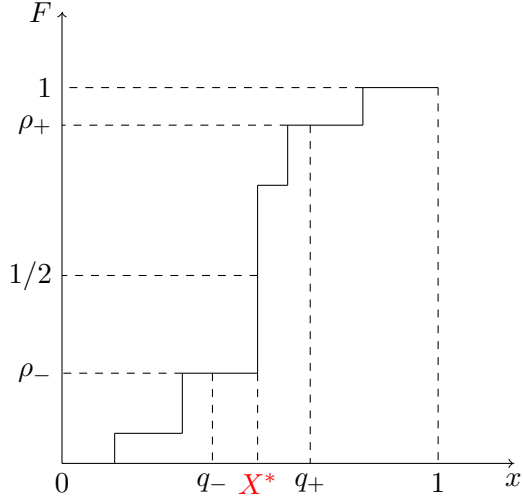
\begin{figure}
\begin{center}
   \begin{tikzpicture}[scale=1]
	\draw (0.7,0.4)--(0.7,0) node {};
	\draw (0.7,0.4)--(1.6,0.4) node {};
	\draw (1.6,1.2)--(1.6,0.4) node {};
	\draw (1.6,1.2)--(2.6,1.2) node {};
	\draw (2.6,3.7)--(2.6,1.2) node {};
	\draw (2.6,3.7)--(3,3.7) node {};
	\draw (3,4.5)--(3,3.7) node {};
	\draw (3,4.5)--(4,4.5) node {};
	\draw (4,4.5)--(4,5) node {};
	\draw (5,5)--(4,5) node {};
	\draw[dashed] (4,5)--(0,5) node[left] {$1$};
	\draw[dashed] (3,4.5)--(0,4.5) node[left] {$\rho_+$};
	\draw[dashed] (1.6,1.2)--(0,1.2) node[left] {$\rho_-$};
	\draw[dashed] (2.6,2.5)--(0,2.5) node[left] {$1/2$};
	\draw[dashed] (2.6,1.2)--(2.6,0) node[below,red] {$X^*$};
    \draw [->] (0,0) -- (0,6);
    \draw [->] (0,0) -- (6,0);
    \draw[dashed] (2,1.2)--(2,0) node[below] {$q_-$};
    \draw[dashed] (3.3,4.5)--(3.3,0) node[below] {$q_+$};
    \draw[dashed] (5,5)--(5,0) node[below] {$1$};
    \draw (0,0)--(0,0) node[below] {$0$};
    \draw (6,0)--(6,0) node[below] {$x$};
    \draw (0,6)--(0,6) node[left] {$F$};
    
        \end{tikzpicture}
    \end{center}
    \caption{An illustration of the quantities in~\eqref{eq:def.B}. 
    Conditional on $X^*\in J$ and the responses to the first $i$ queries, the range of $X^*$ is narrowed down to $I_i\cap J=[q_-,q_+]$. Further conditioning on $F(q_-)=\rho_-$ and $F(q_+)=\rho_+$, we show that $F$ restricted to $[q_-,q_+]$ also follows a Dirichlet process after appropriate scaling.}
    \label{fig:Bayes.lower}
   \end{figure}

The proof of Lemma~\ref{lmm:uniform} is deferred to the end of this subsection. It utilizes the self-similarity property of the Dirichlet process. See Section~\ref{sec:self.similar} in the appendix for a description and proof of the self-similarity property. In short, it ensures that the values of $F$ inside of $[q_-,q_+]$ conditional on information outside of $[q_-,q_+]$ also follows a scaled Dirichlet process. Thus the learner cannot gain too much information about the location of $X^*$ in $[q_-,q_+]$.

{\bf Step 3:} Control the speed at which the learner's interval shrinks. Heuristically, since the conditional distribution of $X^*$ stays uniform over the learner's interval in view of Lemma~\ref{lmm:uniform}, the learner cannot search faster than the bisection method, and the learner's interval cannot shrink faster than $1/2$ each time a query is submitted.

Recall that $[0,1]$ is divided into $2/\delta$ subintervals $J_1,...,J_{2/\delta}$ of length $\delta/2$, and $J^*$ denotes the subinterval of contains $X^*$. In this step, by integrating over instances of $\mathcal{B}$, and letting $J$ range over the $2/\delta$ subintervals, we prove the following lemma.

\begin{lemma}
\label{lmm:learner.interval}
For all $i$, we have that 
\begin{equation}
\label{eq:learner.interval}
\mathbb{E}\left(\log \frac{|I_{i+1}\cap J^*|}{|I_i\cap J^*|}\Big\rvert \mathcal{A}\right)\geq -\mathbb{P}\left\{q_{i+1}\in J^*\mid \mathcal{A}\right\}.
\end{equation}
\end{lemma}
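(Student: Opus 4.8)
\textbf{Proof proposal for Lemma~\ref{lmm:learner.interval}.}

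The plan is to fix the index $i$ and deduce \eqref{eq:learner.interval} from Lemma~\ref{lmm:uniform} by first conditioning on the finest event $\mathcal{B}=\mathcal{B}(z,J,y,i,\rho^{(i)},\rho_-,\rho_+)$ and then integrating back up. On $\mathcal{B}$ we know from Lemma~\ref{lmm:uniform} that $X^*\sim\mathrm{Unif}[q_-,q_+]$ where $[q_-,q_+]=I_i\cap J$, and crucially the $(i+1)$st query $q_{i+1}$ is $\mathcal{B}$-measurable (it is a deterministic function of $y$ and of $r^{(i)}=\rho^{(i)}$), hence a constant on $\mathcal{B}$. I would specialize $J=J^*$, i.e.\ run the argument on each of the $2/\delta$ subintervals $J_k$ separately and then sum/average; on the event $\{X^*\in J_k\}\cap\mathcal{A}$ we have $J^*=J_k$ and $I_i\cap J^*=[q_-,q_+]$.

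The core computation is the following. Given $\mathcal{B}$, the learner's interval after the $(i+1)$st query, intersected with $J^*$, is obtained from $[q_-,q_+]$ by possibly cutting it at the point $q_{i+1}$: if $q_{i+1}\notin(q_-,q_+)$ then $I_{i+1}\cap J^* = I_i\cap J^*$ and $\log\frac{|I_{i+1}\cap J^*|}{|I_i\cap J^*|}=0$; if $q_{i+1}\in(q_-,q_+)$, then depending on the sign of the response at $q_{i+1}$ (equivalently on which side of $q_{i+1}$ the median $X^*$ falls) the new interval is either $[q_-,q_{i+1}]$ or $[q_{i+1},q_+]$. Writing $t = (q_{i+1}-q_-)/(q_+-q_-)\in(0,1)$ for the relative position, the conditional law of the new log-ratio given $\mathcal{B}$ puts mass $t$ on $\log t$ and mass $1-t$ on $\log(1-t)$, because $X^*$ is uniform on $[q_-,q_+]$ and thus lands left of $q_{i+1}$ with probability $t$. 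Hence
\[
\mathbb{E}\!\left(\log\frac{|I_{i+1}\cap J^*|}{|I_i\cap J^*|}\,\Big\rvert\,\mathcal{B}\right)
= t\log t + (1-t)\log(1-t) \ \ge\ -1,
\]
using the elementary bound $t\log t+(1-t)\log(1-t)\ge -1$ (the binary entropy is at most $1$ bit), and this bound holds uniformly, including the degenerate case $q_{i+1}\notin(q_-,q_+)$ where the left side is $0$. More sharply, the left side is $\ge -\mathds{1}\{q_{i+1}\in(q_-,q_+)\}$ once one notes it is either $0$ or a value in $[-1,0)$; and $\mathds{1}\{q_{i+1}\in(q_-,q_+)\} = \mathds{1}\{q_{i+1}\in I_i\cap J^*\}$ on $\mathcal{B}$.

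The remaining step is to integrate out the conditioning. The events $\mathcal{B}(z,J^*,y,i,\rho^{(i)},\rho_-,\rho_+)$, as the parameters range over their values (with $J$ fixed to the appropriate subinterval and then summed over subintervals), partition $\mathcal{A}\cap\{q_-<1/2<q_+\}$; the excluded part where $I_i\cap J^*$ does not straddle the median $1/2$ of $F$ cannot actually occur, since $X^*\in I_i\cap J^*$ always and $X^*$ is exactly the $F$-median, so $\rho_-<1/2<\rho_+$ automatically. Thus taking $\mathbb{E}[\,\cdot\mid\mathcal{B}]$ and averaging against the conditional law of the parameters given $\mathcal{A}$ yields, by the tower property,
\[
\mathbb{E}\!\left(\log\frac{|I_{i+1}\cap J^*|}{|I_i\cap J^*|}\,\Big\rvert\,\mathcal{A}\right)
\ \ge\ -\,\mathbb{E}\!\left(\mathds{1}\{q_{i+1}\in I_i\cap J^*\}\,\Big\rvert\,\mathcal{A}\right)
\ =\ -\,\mathbb{P}\{q_{i+1}\in I_i\cap J^*\mid\mathcal{A}\}
\ \ge\ -\,\mathbb{P}\{q_{i+1}\in J^*\mid\mathcal{A}\},
\]
the last inequality because $I_i\cap J^*\subseteq J^*$. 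This is exactly \eqref{eq:learner.interval} (and the slightly stronger middle bound recorded in \eqref{eq:learner.interval.big} of the overview).

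\textbf{Main obstacle.} The routine part is the entropy bound; the delicate part is the bookkeeping in the integration step — verifying that $q_{i+1}$ is genuinely $\mathcal{B}$-measurable (so that the binary-split picture is valid), that the family $\{\mathcal{B}\}$ really does tile $\mathcal{A}$ after one also sums over which subinterval $J^*$ equals and over the continuum of values of $(z,\rho^{(i)},\rho_-,\rho_+)$, and that measurability/regular-conditional-distribution issues for the continuous conditioning variables do not bite. One must also be slightly careful that $I_{i+1}$, defined as the \emph{smallest} interval the learner knows to contain $X^*$, can in principle shrink for reasons other than the cut at $q_{i+1}$ (e.g.\ gradient magnitude information); Lemma~\ref{lmm:uniform} is precisely what rules this out, since it says that conditionally on all the learner's information the posterior of $X^*$ is still uniform on $[q_-,q_+]$, so no finer localization than the binary split is possible — this is where the self-similarity of the Dirichlet process does the work and is the conceptual crux to get right.
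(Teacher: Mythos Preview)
Your proposal is correct and follows essentially the same approach as the paper's own proof: condition on $\mathcal{B}$, use Lemma~\ref{lmm:uniform} to get the uniform posterior on $[q_-,q_+]$, invoke the binary-entropy bound $t\log t+(1-t)\log(1-t)\ge -1$ to obtain the pointwise estimate $\mathbb{E}\bigl(\log\frac{|I_{i+1}\cap J|}{|I_i\cap J|}\mid\mathcal{B}\bigr)\ge -\mathds{1}\{q_{i+1}\in I_i\cap J\}$, and then integrate over the parameters of $\mathcal{B}$ (and over which $J_k$ equals $J^*$) to recover the claim on $\mathcal{A}$. Your commentary on the potential subtlety that $I_{i+1}$ might shrink by more than the bisection cut, and that Lemma~\ref{lmm:uniform} is what prevents this, is exactly the right reading of how the argument hangs together.
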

The proof of Lemma~\ref{lmm:learner.interval} is deferred to the end of this subsection.

{\bf Step 4:} In this step, we apply Lemma~\ref{lmm:learner.interval} to obtain the desired lower bound on the optimal query complexity. By writing $\log |I_n\cap J^*|$ as a telescoping sum, we have that 
\begin{align*}
\mathbb{E}\left(\log |I_n\cap J^*|\mid \mathcal{A}\right) = &\log |I_0\cap J^*| + \sum_{i=0}^{n-1}\mathbb{E}\left(\log \frac{|I_{i+1}\cap J^*|}{|I_i\cap J^*|}\Big\rvert \mathcal{A}\right)\\
=&\log \frac{\delta}{2} + \sum_{i=0}^{n-1}\mathbb{E}\left(\log \frac{|I_{i+1}\cap J^*|}{|I_i\cap J^*|}\Big\rvert \mathcal{A}\right)\\
\geq & \log\frac{\delta}{2} - \mathbb{E}\left(\text{number of queries in }J^*\mid \mathcal{A}\right).
\end{align*}
Therefore, on the one hand, by Jensen's inequality,
\begin{equation}
\label{eq:In.lower}
\log \mathbb{E}(|I_n\cap J^*|\mid \mathcal{A}) \geq \mathbb{E}\left(\log |I_n\cap J^*|\mid \mathcal{A}\right)
\geq \log\frac{\delta}{2}- \mathbb{E}\left(\text{number of queries in }J^*\mid \mathcal{A}\right).
\end{equation}
On the other hand, from the accuracy requirement, we must have $|I_n|\leq \epsilon$ with probability 1. 
Therefore 
\begin{equation}
\label{eq:In.upper}
\mathbb{E}\left(|I_n\cap J^*|\;\Big\rvert\; \mathcal{A}\right)\leq \mathbb{E}\left(|I_n|\;\Big\rvert\;\mathcal{A}\right)\leq \epsilon/2.
\end{equation}
Combining~\eqref{eq:In.lower},~\eqref{eq:In.upper} yields
\begin{equation}
\label{eq:queries.in.Jstar}
\mathbb{E}\left(\text{number of queries in }J^*\mid \mathcal{A}\right) \geq \log\frac{\delta}{\epsilon}.
\end{equation}

Consider an adversary who adopts the {\it proportional-sampling} strategy~\cite{xu2018query}. That is, suppose the adversary's estimator $\widetilde{X}$ is sampled from the empirical distribution of the queries. For this particular $\widetilde{X}$,
\[
\mathbb{P}\left\{\widetilde{X}\in [X^*- \delta/2,X^*+\delta/2]\right\}=\frac{\mathbb{E}(\text{number of queries in }[X^*-\delta/2, X^*+\delta/2])}{n},
\]
which gives a lower bound on the total number of queries:
\begin{equation}
\label{eq:prop.sample}
n\geq L\mathbb{E}(\text{number of queries in }[X^*-\delta/2, X^*+\delta/2]).
\end{equation}
Since $J^*\subset [X^*-\delta/2, X^*+\delta/2]$, it follows from~\eqref{eq:prop.sample} and~\eqref{eq:queries.in.Jstar} that
\begin{equation}
\label{eq:prop.sample.1}
N(\epsilon,\delta,L)\geq L\mathbb{E}(\text{number of queries in }J^*) \geq \mathbb{P}\left(\mathcal{A}\right)L\frac{\delta}{\epsilon}.
\end{equation}
We have thus arrived at the desired query complexity lower bound with
\[
c_1=\mathbb{P}(\mathcal{A})=\mathbb{P}\left\{\beta_{(1)}>1/2\right\} \geq \mathbb{P}\left\{\beta_{1}> 1/2\right\},
\]
where $\beta_1\sim \Beta(1,\alpha)$ is the length of the first stick fom the stick-breaking characterization of the Dirichlet process. The completes the proof of the Bayesian lower bound. \end{proof}

\begin{proof}[Proof of Lemma~\ref{lmm:uniform}]
Since the gradient of the convex function $f^*$ is defined with $(f^*)'=\gamma_-+(\gamma-\gamma_-)F$, the minimizer of $f^*$ is at the median of $F$, i.e.,
\[
X^* = \inf\left\{x: F(x)\geq \frac{-\gamma_-}{\gamma_+-\gamma_-}=\frac{1}{2}\right\}.
\]
Under our prior construction, the distribution of $F$ follows a Dirichlet process with the uniform base distribution on $[0,1]$ and scale parameter $\alpha$. Therefore with probability 1, $F$ is a distribution function with countably many points of discontinuity, which we will refer to as {\it jumps}. If we characterize $F$ with the stick breaking process, then the locations of the jumps are at $X_1,X_2,...$ where the $X_k$'s are independently and uniformly distributed on $[0,1]$. The sizes of the jumps $\beta_1,\beta_2,...$ correspond to the lengths of the sticks from the stick-breaking process. We have $\sum \beta_k=1$, and the two sequences $\{X_k\}_{k\geq 1}$ and $\{\beta_k\}_{k\geq 1}$ are independent.

To proceed, we first show that if the size of the largest jumps is larger than $1/2$, then $X^*$ must occur at the largest jump. That is,
\begin{equation}
\label{eq:long.stick}
\mathcal{A}\subset \cup_{i\geq 1}\left\{X^*=X_k,\beta_{(1)}=\beta_k\right\}.
\end{equation}
To see why, recall that $X^*$ is the median of $F$. Thus $F(X^*)\geq 1/2$ and $\sup_{x<X^*}F(x)\leq 1/2$. Suppose $\beta_{(1)}=\beta_k$. We consider two cases: 
\begin{enumerate}
\item if $X^*<X_k$, then $F(X_k)\geq F(X^*)+\beta_{(1)}>1$; 
\item if, on the other hand, $X^*>X_k$, then $F(X_k)\leq \sup_{x<X^*}F(x)-\beta_{(1)}\le 1/2-\beta_{(1)}<0$.
\end{enumerate}
In neither case can $F$ be a distribution function. Therefore we must have $X^*=X_k$ is the location of the largest jump. 

For $z\geq 1/2$, conditional on $\mathcal{A}_z$ and $X^*\in [q_-,q_+]$, we know that $X^*$ is at the largest jump in $[q_-,q_+]$.  Moreover, since the learner would not have submitted any queries between $q_-$ and $q_+$ at time $i$, the events conditioned on do not contain any information on the location of the largest jump. Therefore the conditional distribution of $X^*$ is uniform. To prove the claim rigorously, we need to invoke the self-similarity property of the Dirichlet process.

Recall that $F$ follows a Dirichlet Process is supported on $[0,1]$ with base distribution $\lambda_{[0,1]}$. The self-similarity property 
asserts that
for any finite partition $0=x_0\leq x_1\leq...\leq x_{n-1}\leq x_n=1$ of $[0,1]$, conditional on the realization of $F$ on $x_1,...,x_n$, the restriction of $F$ onto each subinterval is also a Dirichlet process scaled. In particular, for each $j\leq n$, we have
\[
\mathcal{L}\left(\frac{[F]_{[x_j,x_{j+1}]}-t_j}{t_{j+1}-t_j} \;\Big\rvert\; F(x_1)=t_1, ...,F(x_{n-1})=t_{n-1}\right)
=\text{DP}\left(\lambda_{[x_{j},x_{j+1}]}, \alpha \lambda\left(\left[x_j,x_{j+1}\right]\right)\right),
\]
where $[F]_{I}$ denotes the function $F$ restricted to interval $I$, $\lambda_I$ denotes the uniform probability measure on $I$, and $\lambda(I)$ denotes the Lebesgue measure of $I$.
This property is well-known, and follows from the definition of the Dirichlet process. See Section~\ref{sec:self.similar} in the appendix for a proof.

Importantly, the following is a direct consequence of the self-similarity property. For each interval $[a,b]\subset[0,1]$, conditional on the value of $F(a)$ and $F(b)$, the distribution of $F$ restricted to $[a,b]$ is independent of the realization of $F$ outside of $[a,b]$. As a result, for each interval $I\subset [0,1]$, given $X^*\in I$, the learner cannot gain any additional information on $X^*$ without querying in $I$. This property ensures that the posterior distribution of $X^*$ conditional on $\mathcal{A}$ and the responses is uniform between the two closest queries that sandwich $X^*$. Therefore, the learner cannot beat the bisection search on the event $\mathcal{A}$.

By definition of the learner's interval $I_i$, none of the first $i$ queries $q_1,...,q_i$ can be in $I_i\cap J_j=[q_-,q_+]$. Since $X^*$ is determined by the values of $F$ inside $[q_-,q_+]$, by the self-similarity property of the Dirichlet process, $X^*$ is independent of the responses to the first $i$ queries conditioning on the values of $F(q_-)$ and $F(q_+)$. Therefore the event $\{r^{(i)}=\rho^{(i)}\}$ can be dropped from $\mathcal{B}$ without changing the conditional distribution of $X^*$. The indicator $\mathds{1}\{X^*\in J\}$ is completely determined by whether $\rho_-$ and $\rho_+$ are above or below $1/2$;
and the outside randomness $Y$ is independent of $F$. Therefore we can drop both events $\{X^*\in J\}$ and $\{Y=y\}$, and obtain
\[
\mathcal{L}\left(X^* \mid \mathcal{B}\right)
=\mathcal{L}\left(X^* \mid \mathcal{A}_z,F(q_-)=\rho_-,F(q_+)=\rho_+\right).
\]

By the self-similarity property of the Dirichlet process, given $F(q_-)=\rho_-$ and $F(q_+)=\rho_+$,the conditional distribution of $(F-\rho_-)/(\rho_+-\rho_-)$ restricted to $[q_-,q_+]$ is also a Dirichlet process with the uniform base distribution on $[q_-,q_+]$ and scaling parameter $\alpha'=\alpha(q_+-q_-)$. In other words, there exist ancillary random vectors $\{X_k'\}_{k\geq 1}$, $\{\beta_k'\}_{k\geq 1}$ generated from a stick-breaking process that characterize the distribution function
$$
\widetilde{F}=(F-\rho_-)/(\rho_+-\rho_-)
$$
on $[q_-,q_+]$. In addition, $X_k'\stackrel{i.i.d.}{\sim}\text{Unif}[q_-,q_+]$, and $(\{X_k'\}_{k\geq 1},\{\beta_k'\}_{k\geq 1})$ 
is independent of $(F(q_-),F(q_+))$.

We claim that for all $z\geq 1/2$, the event $\mathcal{A}_z=\{\beta_{(1)}=z\}$ is equivalent to $\{\beta_{(1)}'=z/(\rho_+-\rho_-)\}$. Suppose $\mathcal{A}_z$ holds, and say $\beta_{(1)}=\beta_j$. Then by~\eqref{eq:long.stick}, $X^*=X_j$. Thus $[q_-,q_+]$ contains the largest jump in $F$. Since $\widetilde{F}$ is a scaled version of $F$ restricted to $[q_-,q_+]$, the largest jump of $\widetilde{F}$ must be of size $z/(\rho_+-\rho_-)$. Conversely, if $\beta_{(1)}'=z/(\rho_+-\rho_-)$, then $F$ contains a jump of size $z$. When $z\geq 1/2$, this must be the largest jump in $F$, i.e. $\beta_{(1)}=z$.

Note that conditional on $\mathcal{A}_z$ for $z \ge 1/2$, $X^*$ can be written as the location of the largest jump in $\widetilde{F}$. We have shown that $X^*$ and $\mathcal{A}_z$ can both be expressed as functions that only depend on $\{X_k',\beta_k'\}$. As a result,
\begin{align*}
&\mathcal{L}(X^* \mid \mathcal{A}_z,F(q_-)=\rho_-,F(q_+)=\rho_+)\\
= & \mathcal{L}\left(\text{location of the largest jump in } F' \mid \beta_{(1)}'=\frac{z}{\rho_+ - \rho_-}, F(q_-)=\rho_-,F(q_+)=\rho_+\right)\\
\stackrel{(a)}{=} & \mathcal{L}\left(\text{location of the largest jump in } F' \mid \beta_{(1)}'=\frac{z}{\rho_+ - \rho_-}\right)\\
\stackrel{(b)}{=} &\mathcal{L}\left(\text{location of the largest jump in } F' \right)\\
\stackrel{(c)}{=}&\mathcal{L}(X_1')=\text{Unif}[q_-,q_+],
\end{align*}
where (a) is from the independence between $(\{X'_k\}_{k\geq 1},\{\beta'_k\}_{k\geq 1})$ and $(F(q_-),F(q_+))$; (b) holds because by the stick-breaking characterization of the Dirichlet process, the locations of the jumps $\{\beta_k\}_{k\geq 1}$ and the sizes of the jumps $\{X_k\}_{k\geq 1}$ are independent. More specifically, let $j$ be the index of the largest jump, i.e., $\beta_{(1)}'=\beta_j'$. Then $j$ is only a function of $\{\beta_k\}_{k\geq 1}$ and is therefore independent of $\{X_k'\}_{k\geq 1}$. We have $X_j'$ is independent of $\{\beta_k'\}_{k\geq 1}$, thus we can drop the conditional event which only depends on $\{\beta_k'\}_{k\geq 1}$; (c) is again from the independence of $j$ and $\{\beta_k'\}_{k\leq 1}$. Since $\{X_k'\}_{k\geq 1}$ are distributed {\it i.i.d.} Unif $q_-,q_+$, we have $\mathcal{L}(X_j')=\mathcal{L}(X_1')=\text{Unif}[q_-,q_+]$. 

\end{proof}

\begin{proof}[Proof of Lemma \ref{lmm:learner.interval}]

From Lemma~\ref{lmm:uniform}, we have $\mathcal{L}(X^*\mid \mathcal{B})=\text{Unif}[I_i\cap J]$. We first claim that as a consequence,
\begin{equation}
\label{eq:In.lower.det}
\mathbb{E}\left(\log \frac{|I_{i+1}\cap J|}{|I_i\cap J|}\Big\rvert \mathcal{B} \right)
 \geq -\mathds{1}\left\{q_{i+1}=\phi_i(\rho^{(i)}, y)\in I_i\cap J\right\}.
\end{equation}
The inequality \eqref{eq:In.lower.det}  can be interpreted as follows. Firstly, the interval $I_{i}\cap J^*$ is only shortened when querying within $I_{i}\cap J^*$. Secondly, conditional on all instances of the behavior of $F$ outside of $I_{i}\cap J^*$, on average, no query can reduce the length of $I_{i}\cap J^*$ by more than a half. 

By taking the union of the events $\mathcal{B}$ over all the variables $z> 1/2$, $y\in [-0,1]$, $\rho_-<1/2$, $\rho_+>1/2$, $\rho^{(i)}$, and $J$ ranging over $J_1,...,J_{2/\delta}$, we arrive at the event $\mathcal{A}$. Therefore, 
integrating \eqref{eq:In.lower.det} over these variables yields that 
\begin{equation*}
\mathbb{E}\left(\log\frac{|I_{i+1}\cap J^*|}{|I_i\cap J^*|} \, \Big\rvert \, \mathcal{A}\right) \geq -\mathbb{P}\left\{q_{i+1}\in I_i\cap J^*\mid \mathcal{A}\right\}\geq -\mathbb{P}\left\{q_{i+1}\in J^*\mid \mathcal{A}\right\}.
\end{equation*}

It remains to verify~\eqref{eq:In.lower.det}. If $q_{i+1}\notin I_i\cap J$, then $I_{i+1}\cap J=I_i\cap J$ and the claim~\eqref{eq:In.lower.det} trivially holds. If $q_{i+1}\in I_i\cap J$, we have
\[
\log \frac{|I_{i+1}\cap J|}{|I_i\cap J|} 
= \mathds{1}\{X^*\leq  q_{i+1}\} \log\frac{q_{i+1}-q_-}{q_+-q_-} + \mathds{1}\{X^*> q_{i+1}\}\log\frac{q_+-q_{i+1}}{q_+-q_-}.
\]
Since the conditional distribution of $X^*$ is uniform, we have
\[
\mathbb{E}\left(\log \frac{|I_{i+1}\cap J|}{|I_i\cap J|}\Big\rvert \mathcal{B}\right)
\geq 
\inf_{t\in [0,1]}[t\log t+(1-t)\log (1-t) ]=-1.
\]
We have finished the proof of~\eqref{eq:In.lower.det} and, by consequence, Lemma \ref{lmm:learner.interval}. 

\end{proof}

\medskip

\section{Extension to Multidimensions}
\label{sec:ddim}
In this section we extend our results under the minimax setting to optimization of convex separable functions in $\mathbb{R}^d$. 
Separable convex optimization arises in a variety applications such as inventory control in operation research,
resource allocation in networking, and distributed optimization in multi-agent networks~\cite{nedic2008convex,padakandla2010separable,boyd2011distributed}, when the global objection function is a sum of
the local objective functions and each local objective function depends only on one component of the decision variable. 
Here, separability ensures that there is no cross-coordinate information leakage. 
Further generalizing our result to allow for general (non-separable) functions in $\mathbb{R}^d$ is left as future work.

Suppose the true function $f^*:[0,1]^d\rightarrow\mathbb{R}$ belongs to a family of convex separable functions
\[
\mathcal{F} = \left\{f: f(x)=\sum_{i=1}^d f_i(x_i), \, f_i\in\mathcal{F}_i\right\},
\]
where each $\mathcal{F}_i$ is a family of one-dimensional convex functions. 
For each query $q\in [0,1]^d$ submitted, the learner receives the gradient vector $\nabla f(q)=(f_1'(q_1),...,f_d'(q_d))$ as the response. We say a querying strategy is $\epsilon$-accurate if
\[
\inf_{f\in \mathcal{F}} \mathbb{P}_f\left\{\left\|\widehat{X}-x\right\|_\infty\leq \epsilon/2\right\}=1,
\]
where $x$ is the minimizer of $f$. 
We say $\phi$ is $(\delta,L)$-private if 
\[
\sup_{\widetilde{X}}\inf_{f\in\mathcal{F}} \mathbb{P}_f\left\{\left\|\widetilde{X}-x\right\|_\infty\leq \delta/2\right\}\leq 1/L.
\]
In other words, we declare privacy breach if the adversary's estimator is within a $\delta/2$-neighborhood around the true minimizer with probability higher than $1/L$. 
As in the one-dimensional case, we need to impose some assumption on the complexity of the function class $\mathcal{F}$. Since $\mathcal{F}$ contains only separable functions, we can simply impose the one-dimensional assumption onto each of the $d$ one-dimensional function classes $\mathcal{F}_1,...,\mathcal{F}_d$.
Below is the extension of our one-dimensional result to $d$ dimensions.

\begin{theorem}
Let $N_d(\epsilon,\delta,L)$ denote the optimal query complexity in dimension $d$ under the minimax setting. Suppose $\mathcal{F}_i$ all satisfy Assumption~\ref{assump:complexity} for all $i=1,...,d$. If $2\epsilon\leq \delta\leq L^{-1/d}$, then
\[
2L^{1/d}+\log\frac{\delta}{\epsilon}-2
\leq N_d(\epsilon,\delta,L) \leq  
\begin{cases}
2L^{1/d}+\log\frac{\delta}{\epsilon} & \text{ if } L^{1/d} \ge \log \frac{1}{\delta} \\
 L^{1/d}+ \log\frac{1}{\epsilon} & \text{ o.w.} 
 \end{cases}
 \, .
\]
\end{theorem}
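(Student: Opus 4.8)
## Proof Proposal for the Multidimensional Theorem

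\textbf{Overall approach.} The plan is to reduce the $d$-dimensional separable problem to $d$ independent copies of the one-dimensional problem, and then show that both the accuracy and the privacy constraints ``tensorize'' in a way that produces the stated bounds, with $L$ replaced by $L^{1/d}$ in the per-coordinate privacy level. Because $f^* = \sum_i f_i(x_i)$ and the gradient response is the vector $(f_1'(q_1),\ldots,f_d'(q_d))$, each coordinate of the response depends only on the corresponding coordinate of the query, so a query sequence in $[0,1]^d$ projects to $d$ separate query sequences in $[0,1]$, and the responses in coordinate $i$ are exactly what a one-dimensional learner would see for $f_i$. The $\ell_\infty$ norm is the crucial choice: $\|\widehat X - x\|_\infty \le \epsilon/2$ is equivalent to $|\widehat X_i - x_i| \le \epsilon/2$ for \emph{every} $i$, and similarly for the adversary's $\delta/2$-ball. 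This is what lets us treat the coordinates as a product.

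\textbf{Upper bound.} I would run the one-dimensional Algorithm~\ref{alg:minimax} independently in each of the $d$ coordinates, but tuned to privacy level $\ell := \lceil L^{1/d}\rceil$ (or a convenient integer near $L^{1/d}$) rather than $L$. In coordinate $i$ this uses at most $2\ell + \log(\delta/\epsilon)$ queries (or $\ell + \log(1/\epsilon)$ in the small-$\delta$ regime), and since a single $d$-dimensional query carries all $d$ coordinate-queries simultaneously, the total number of $d$-dimensional rounds is the \emph{max} over coordinates of the per-coordinate counts, not the sum --- i.e.\ $2L^{1/d} + \log(\delta/\epsilon)$ up to rounding, matching the claimed bound (the $\log(\delta/\epsilon)$ term does not pick up a factor of $d$). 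Accuracy is immediate: each coordinate is $\epsilon$-accurate, hence the $\ell_\infty$ error is $\le \epsilon/2$. For privacy, the one-dimensional construction plants $\ell$ candidate locations in coordinate $i$ that the adversary cannot distinguish; taking the product over coordinates yields $\ell^d \ge L$ candidate points in $[0,1]^d$, pairwise $\delta$-separated in $\ell_\infty$ (since they differ by $\ge\delta$ in at least one coordinate), all consistent with the observed query sequence. An adversary restricted to these indistinguishable candidates can do no better than $1/\ell^d \le 1/L$, and by Assumption~\ref{assump:complexity} applied coordinatewise there genuinely exist functions in $\mathcal F$ realizing each candidate with the same query transcript. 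One should be mildly careful that with $\ell = \lceil L^{1/d}\rceil$ we get $\ell^d \ge L$ so the $1/L$ bound holds; the query count $2\lceil L^{1/d}\rceil$ absorbs into $2L^{1/d}$ up to the $O(1)$ slack already present in the statement.

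\textbf{Lower bound.} Here I would argue that the $d$-dimensional query complexity is at least the one-dimensional complexity with parameter $L^{1/d}$: given any $\epsilon$-accurate, $(\delta,L)$-private $d$-dimensional strategy $\phi$, consider the information set $\mathcal I(q) \subset [0,1]^d$ of a $d$-dimensional query transcript $q$. As in the proof of Theorem~\ref{thm:minimax}, pitting the adversary who samples uniformly from a $\delta/2$-covering set of $\mathcal I(q)$ against $(\delta,L)$-privacy forces some transcript $\bar q$ with $\ell_\infty$-covering number $N_c(\mathcal I(\bar q), \delta/2) > L$. Now the key separability fact is that, because responses are coordinatewise, $\mathcal I(\bar q)$ contains the product $\prod_{i=1}^d \mathcal I_i(\bar q^{(i)})$ of the one-dimensional information sets of the projected transcripts (any combination of coordinatewise-consistent minimizers is realized by the separable function assembled from the coordinate witnesses, which Assumption~\ref{assump:complexity} on each $\mathcal F_i$ guarantees). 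Hence $\prod_i N_c(\mathcal I_i(\bar q^{(i)}), \delta/2) \ge N_c(\mathcal I(\bar q),\delta/2) > L$, so \emph{some} coordinate $i_0$ has one-dimensional covering number $> L^{1/d}$. Applying the one-dimensional lower-bound machinery (the covering-number/sandwiching argument plus Lemma~\ref{lmm:complexity}) to the projected strategy in coordinate $i_0$ shows $\bar q^{(i_0)}$ contains $\ge 2L^{1/d}$ paired queries plus, via the ``last-localized'' interval argument, an extra $\log(\delta/\epsilon)$ search queries --- and since each $d$-dimensional round contributes exactly one query to coordinate $i_0$, this gives $N_d \ge 2L^{1/d} + \log(\delta/\epsilon) - 2$.

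\textbf{Main obstacle.} The delicate point is the lower bound's reduction step: one must verify that the $\ell_\infty$-covering number of the $d$-dimensional information set really is controlled by the product of the coordinate covering numbers, which in turn requires that $\mathcal I(\bar q) \supseteq \prod_i \mathcal I_i(\bar q^{(i)})$ --- i.e.\ that coordinatewise-plausible minimizers can be simultaneously realized by a single separable function producing transcript $\bar q$ \emph{with the same internal randomness}. This is where separability and the coordinatewise form of Assumption~\ref{assump:complexity} are essential; with non-separable functions the argument breaks, which is precisely why the general $\R^d$ case is left open. A secondary technical nuisance is bookkeeping the rounding between $L^{1/d}$ and $\lceil L^{1/d}\rceil$ and checking the regime condition $\delta \le L^{-1/d}$ (which is what makes the $L^{1/d}$ candidates in each coordinate fit inside $[0,1]$ while staying $\delta$-separated), but this is routine given the $O(1)$ slack in the bounds.
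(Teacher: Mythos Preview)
Your upper bound is essentially the paper's: run the one-dimensional algorithm in parallel across coordinates with per-coordinate privacy level $\lceil L^{1/d}\rceil$, so that the product of the $\ell$ per-coordinate candidates gives $\ge L$ jointly indistinguishable $\ell_\infty$-separated points. That part is fine.

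The lower bound, however, has the set inclusion pointing the wrong way, and this is not a cosmetic slip. You argue that $\mathcal{I}(\bar q)\supseteq \prod_i \mathcal{I}_i(\bar q^{(i)})$ (``any combination of coordinatewise-consistent minimizers is realized''), and from this you write $\prod_i N_c(\mathcal{I}_i,\delta/2)\ge N_c(\mathcal{I}(\bar q),\delta/2)$. But containment $A\supseteq B$ gives $N_c(A)\ge N_c(B)$, not the reverse; your stated inequality would require $\mathcal{I}(\bar q)\subseteq \prod_i \mathcal{I}_i$, the opposite of what you established. Worse, the inclusion you do try to prove is itself delicate (you flag the shared-seed issue yourself): if the witnesses $(f^{(i)},y_i)$ use different seeds or produce different cross-coordinate responses, the assembled function $\sum_i f^{(i)}_i$ need not reproduce the transcript $\bar q$ under the $d$-dimensional strategy, since $\phi$ may entangle responses across coordinates.

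The paper's fix is both simpler and avoids all of this. Define $\mathcal{I}_i(q)$ as the projection of the $d$-dimensional information set onto coordinate $i$. Then the inclusion $\mathcal{I}(\bar q)\subseteq \prod_i \mathcal{I}_i(\bar q)$ is automatic (any set sits inside the product of its coordinate projections), and in $\ell_\infty$ the covering number of a product is the product of the covering numbers, so
\[
L< N_c\bigl(\mathcal{I}(\bar q),\tfrac{\delta}{2},\|\cdot\|_\infty\bigr)\le \prod_{i\le d} N_c\bigl(\mathcal{I}_i(\bar q),\tfrac{\delta}{2}\bigr),
\]
forcing some coordinate to have one-dimensional covering number at least $L^{1/d}$. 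Separability is not used at this step at all; it enters only afterwards, when you run the one-dimensional sandwiching and Lemma~\ref{lmm:complexity} arguments in the selected coordinate (there you need that modifying $f_i$ inside an interval containing no coord-$i$ queries leaves all responses, hence the full transcript, unchanged). So the step you identified as the ``main obstacle'' is in fact unnecessary, and the correct inequality comes for free once you take $\mathcal{I}_i$ to be the projection rather than trying to build the product from below.
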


\begin{remark}
We choose to quantify the error of the learner and the adversary with respect to the $\|\cdot\|_\infty$ norm because $\|x-y\|_\infty\leq \epsilon/2$ is equivalent to $|x_i-y_i|\leq \epsilon/2$ for all $i\leq d$, so the analysis can be elegantly reduced to the one-dimensional case. However our result does not crucially depend on the choice of the norm. From the basic inequality $\|x\|_\infty\leq \|x\|_2\leq \sqrt{d}\|x\|_\infty$, we have that the optimal query complexity can differ by at most a $d$-dependent additive constant if the Euclidian norm were used instead.
\end{remark}
\begin{proof}[Proof of the upper bound.]
Under the minimax privacy framework, to make a strategy private, we only need to find $L$ functions $f^{(1)},..,f^{(L)}\in \mathcal{F}$ whose minimizers are $\delta$-apart, such that the query sequence for $f^{(1)},...,f^{(L)}$ are identical. That would ensure that the adversary who only observes the query sequence cannot succeed with probability higher than $1/L$. 

To construct such $L$ functions, we design a querying strategy that submits $L^{1/d}$ {\it guesses} $\delta$-apart along each dimension. To recap, in Section~\ref{sec:proof.overview} we defined a guess at $x$ to be a pair of $\epsilon$-apart queries $(x,x+\epsilon)$. 
The guesses across the $d$ dimensions intersect with each other in $[0,1]^d$ to create $(L^{1/d})^d=L$ cubes of diameter $\epsilon$ that potentially contain the minimizer of the true function $f^*$. The guesses are submitted following the same algorithm as in the one-dimensional case (see the upper bound proof of Theorem~\ref{thm:minimax}), except with $L$ replaced by $L^{1/d}$. 

Note that since each query is a $d$-dimensional vector and the function $f^*$ is separable, we can run the search algorithms along the $d$ directions in parallel.
More concretely, write $f^*(x)=\sum_{i\leq d}f_i^*(x_i)$, and let $q=(q_1,q_2,...,q_n)$ be the query sequence where $q_j=(q_{j,1},...,q_{j,d})\in[0,1]^d$. Each time the learner submits a query $q_j$, she receives the gradient vector
\[
\nabla f^*(q_j) = \left((f_1^*)'(q_{j,1}), ..., (f_d^*)'(q_{j,d})\right).
\]
For each dimension $i$, the learner leverages the gradient information $(f_i^*)'(q_{j,i})$ and constructs the next query $q_{j+1,i}$ in dimension $i$, as if she were learning the minimizer of $f_i^*$ in one-dimension.

In particular, fix any dimension $1 \le i \le d$.
The first $2L^{1/d}$ queries $q_{1,i},...,q_{2L^{1/d},i}$ consist of $L^{1/d}$ pairs of queries (guesses) that are $\delta$-apart. When $\delta \leq 2^{-L^{1/d}}$, 
these guesses are submitted along the bisection search path: 
\begin{enumerate}
\item The first guess is at $1/2$, i.e., $q_{1,i}=1/2$ and $q_{2,i}=1/2+\epsilon$. The learner's interval $I$ is initialized to be $[0,1]$.
\item For each $1\leq j\leq L^{1/d}-1$, submit the $(j+1)$'th guess at follows: if none of the previous guesses is correct, then inspect the gradient $(f^*_i)'(q_{2j-1,i})$ from the $j$'th guess to deduce which half of $I$ contains the minimizer $X_i^*$ of $f^*_i$. Update the learner's interval $I$ accordingly so that it contains $X_i^*$. Submit the $(j+1)$'th guess at the midpoint of the updated $I$. 
If one of the first $j$ guesses is correct, then update $I$ to its right half, and submit the $(j+1)$'th guess at its midpoint.
\end{enumerate}
When $\delta> 2^{-L^{1/d}}$, only the first $K$ guesses are submitted along the bisection path, and the remaining $L^{1/d}-K$ guesses are submitted via a grid search on the interval $I$ generated from the first $K$ guesses. Here $K$ is the largest integer for which all the guesses are $\delta$-apart. Under the assumption $\delta\leq L^{-1/d}$ such a $K$ always exists.

After all the guesses are submitted, if none of the guesses is correct, the learner runs a simple bisection search on a $\max\{2^{-L^{1/d}},\delta\}$-length interval until reaching $\epsilon$-accuracy; otherwise the learner simply fills the remaining queries along this dimension with trivial queries $q_{i,j}=1$ for all $j\geq 2L^{1/d}$. The total number of queries is exactly the desired upper bound $2L^{1/d} + \log(\max\{2^{-L^{1/d}},\delta\}/\epsilon)$.

Next we show this querying strategy is $(\delta,L)$-private. Here we give the proof in the $\delta\leq 2^{-L^{1/d}}$ case. The proof for the $\delta>2^{-L^{1/d}}$ case follows analogously. 
For each $i$, it is easy to see that if 
\[
X_i^*\in \cup_{j\leq L^{1/d}} [1- 2^{-j}, 1-2^{-j}+\epsilon]
\]
then the queries along the $i$'th dimension would always be $L$ guesses at $1/2,3/4,...,1-2^{L^{1/d}}$, followed by trivial queries at 1. 
As a result, for all $f^*\in \mathcal{F}$ such that
\[
X^*\in \prod_{i\leq d}\left(\cup_{j\leq L^{1/d}}[1- 2^{-j}, 1-2^{-j}+\epsilon]\right)\stackrel{\Delta}{=}J,
\]
share the same query sequence. Clearly $J$ contains $(L^{1/d})^d$ members that are separated by at least $\delta$ in $\|\cdot\|_\infty$ distance. Hence the strategy is $(\delta,L)$-private.

\end{proof}

\begin{proof}[Proof of the lower bound.]
Let $\phi$ be a querying strategy that is $\epsilon$-accurate and $(\delta,L)$-private. 
Via the same argument in one-dimension, we can show that there is at least one query sequence $q$ whose information set $\mathcal{I}(q)$ has a $\delta/2$-covering number at least $L$. For each $i=1,...,d$, let 	
\[
\mathcal{I}_i(q)=\left\{x_i: x=(x_1,...,x_i,...,x_d)\in \mathcal{I}(q) \text{ for some }x\in [0,1]^d\right \}
\]
be the projection of $\mathcal{I}(q)$ to dimension $i$. Then we have $\mathcal{I}(q)\subset \prod_{i\leq d}\mathcal{I}_i(q)$, thus
\[
L\leq N_c\left(\mathcal{I}(q),\delta/2,\|\cdot\|_\infty\right)\leq N_c\left(\prod_{i\leq d} \mathcal{I}_i(q),\delta/2,\|\cdot\|_\infty\right)=\prod_{i\leq d}N_c\left(\mathcal{I}_i(q),\delta/2,|\cdot|\right).
\]
Therefore for at least one $i\leq d$, we must have that the $\delta/2$-covering number of the projection $\mathcal{I}_i(q)$ is no less than $L^{1/d}$. It follows that $\mathcal{I}_i(q)$ contains $x_i^{(1)},...,x_i^{(L^{1/d})}$ that are at least $\delta/2$-apart. For the strategy to be $\epsilon$-accurate, the queries in $q$ along this dimension $i$ must contain at least $L^{1/d}$ pairs of $\epsilon$-apart queries sandwiching $x_i^{(1)},...,x_i^{(L^{1/d})}$. 
The rest of the proof exactly follows the one-dimensional case.
\end{proof}

\section{Acknowledgment}
The authors thank Niva Ran and Benjamin Ran for inspiring the algorithm used in the upper bound of the Bayesian formulation of the problem.

\bibliographystyle{plain}
\bibliography{bib}

\appendix

\section{Self-similarity property of the Dirichlet Process}\label{sec:self.similar}
\begin{proposition}\label{prop:self.similar}
Let $\mu$ be a random probability measure on $\mathcal{X}$ that follows a Dirichlet Process with base distribution function $\mu_0$ and concentration parameter $\alpha$. Let $\mathcal{X}=\cup_{i\leq n}B_i$ be an arbitrary finite partition of $\mathcal{X}$. Then for all $i\leq n$, we have
\[
\mu_{B_i}\mid \mu(B_1), ...,\mu(B_n) \sim \text{DP}\left(\mu_{0,B_i}, \alpha \mu_0\left(B_i\right)\right),
\]
where $\mu_{B_i}$ and $\mu_{0,B_i}$ denote the conditional probability measures of $\mu$ and $\mu_0$ respectively, conditioned on $B_i$.
\end{proposition}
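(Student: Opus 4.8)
The plan is to reduce the claim to the defining finite-dimensional property of the Dirichlet process, exploiting the \emph{aggregation} (or \emph{lumping}) property of the finite-dimensional Dirichlet distribution. Throughout I would assume $\mu_0(B_i)>0$, since otherwise $\mu(B_i)=0$ almost surely and the conditional measure $\mu_{B_i}$ is undefined; any index $\ell$ with $\mu_0(B_\ell)=0$ can simply be dropped from the partition.

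First I would fix $i$ and let $B_i=\cup_{j\le m}C_j$ be an arbitrary finite measurable partition of $B_i$. Then $\{C_1,\dots,C_m\}\cup\{B_\ell:\ell\ne i\}$ is a finite partition of $\mathcal{X}$, so the definition of $\mathrm{DP}(\mu_0,\alpha)$ gives
\[
\bigl(\mu(C_1),\dots,\mu(C_m),(\mu(B_\ell))_{\ell\ne i}\bigr)\sim\Dir\bigl(\alpha\mu_0(C_1),\dots,\alpha\mu_0(C_m),(\alpha\mu_0(B_\ell))_{\ell\ne i}\bigr).
\]
Next I would invoke the aggregation property of the Dirichlet distribution: if $(p_1,\dots,p_k)\sim\Dir(a_1,\dots,a_k)$ and the coordinates are grouped into blocks, then the vector of block sums is Dirichlet with the block-summed parameters, within each block the coordinates normalized by their block sum form a Dirichlet vector with the within-block parameters, and the within-block normalized vectors are jointly independent of the vector of block sums. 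Applied with the block $\{1,\dots,m\}$ (whose sum is $\sum_j\mu(C_j)=\mu(B_i)$) and singleton blocks for the remaining $B_\ell$, this yields that $\bigl(\mu(C_1)/\mu(B_i),\dots,\mu(C_m)/\mu(B_i)\bigr)\sim\Dir(\alpha\mu_0(C_1),\dots,\alpha\mu_0(C_m))$ and that this vector is independent of the block-sum vector, which is just $(\mu(B_1),\dots,\mu(B_n))$ after reordering.

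Consequently, conditioning on $(\mu(B_1),\dots,\mu(B_n))$ leaves the law of $(\mu_{B_i}(C_1),\dots,\mu_{B_i}(C_m))=(\mu(C_1)/\mu(B_i),\dots,\mu(C_m)/\mu(B_i))$ unchanged, still equal to $\Dir(\alpha\mu_0(C_1),\dots,\alpha\mu_0(C_m))$. Writing $\alpha\mu_0(C_j)=\bigl(\alpha\mu_0(B_i)\bigr)\,\mu_{0,B_i}(C_j)$ identifies this with the finite-dimensional marginal that $\mathrm{DP}(\mu_{0,B_i},\alpha\mu_0(B_i))$ assigns to the partition $\{C_j\}$ of $B_i$. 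Since $\{C_j\}$ was an arbitrary finite partition of $B_i$, and a Dirichlet process is characterized by its finite-dimensional marginals, this establishes $\mu_{B_i}\mid\mu(B_1),\dots,\mu(B_n)\sim\mathrm{DP}(\mu_{0,B_i},\alpha\mu_0(B_i))$.

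The only nontrivial ingredient is the aggregation–independence property of the Dirichlet distribution used in the second step; the cleanest route to it is the gamma representation (independent $\mathrm{Gamma}(a_j,1)$ variables normalized by their total sum), using that a sum of independent gammas is again gamma-distributed and is independent of the corresponding normalized block. Everything else is bookkeeping, together with the standard fact that verifying all finite-dimensional marginals on partitions of $B_i$ suffices to pin down the law of $\mu_{B_i}$.
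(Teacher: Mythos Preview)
Your argument is correct and follows essentially the same route as the paper: take an arbitrary finite partition of $B_i$, extend it to a partition of $\mathcal{X}$, apply the defining Dirichlet property, and then use the aggregation/independence property of the Dirichlet distribution (which the paper phrases as a direct density computation) to identify the conditional finite-dimensional marginals. Your version is slightly more explicit about the independence step and the gamma-representation justification, but the structure is the same.
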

\begin{proof}
For simplicity we present the proof only for $i=1$. The proof for general $i$ is identical. Let $B_1=\cup_{j\leq m}A_j$ be an arbitrary finite partition of $B_1$. Then $(A_1,...,A_m,B_2,...,B_n)$ is a partition of $\mathcal{X}$. Therefore from the definition of the Dirichlet Process, we have
\[
\left(\mu\left(A_1\right),...,\mu\left(A_m\right), \mu\left(B_2\right),...,\mu\left(B_n\right)\right) \sim \text{Dir}\left(\alpha\mu_0\left(A_1\right),...,\alpha\mu_0\left(A_m\right), \alpha\mu_0\left(B_2\right),...,\alpha\mu_0\left(B_n\right)\right).
\]
From the density function of the Dirichlet distribution, we can derive that
\[
\frac{\left(\mu\left(A_1\right),...,\mu\left(A_m\right)\right)}{1-\sum_{i\geq 2}\mu(B_i)}
\;\biggr\rvert\; \mu\left(B_2\right),...,\mu\left(B_n\right)
\sim \text{Dir}\left(\alpha\mu_0\left(A_1\right),...,\alpha\mu_0\left(A_m\right)\right).
\]
Again by definition of the Dirichlet Process, we have
\[
\mu_{B_1}\mid \mu\left(B_2\right),...,\mu\left(B_n\right)\sim \text{DP}\left(\left[\mu_0\right]_{B_1},\alpha\right)=\text{DP}\left(\mu_{0,B_1},\alpha\mu_0\left(B_1\right)\right),
\]
where $[\mu_0]_{B_1}$ denotes the measure $\mu_0$ restricted to $B_1$, which is not necessarily a probability measure.
\end{proof}

Consider the special case where $\mathcal{X}=[0,1]$. 
As a corollary of Proposition~\ref{prop:self.similar}, we have for any finite partition $0=x_0\leq x_1\leq...\leq x_{n-1}\leq x_n=1$ of $[0,1]$, 
\[
\mathcal{L}\left(\frac{[F]_{[x_i,x_{i+1}]}-t_i}{t_{i+1}-t_i} \;\Big\rvert\; F(x_1)=t_1, ...,F(x_{n-1})=t_{n-1}\right)
=\text{DP}\left(\mu_{0,[x_{i},x_{i+1}]}, \alpha \mu_0\left[x_i,x_{i+1}\right]\right).
\]

\section{Proof of Lemma~\ref{lmm:beta}}
In this section we prove the technical result Lemma~\ref{lmm:beta} on the Beta distribution. The statement of Lemma~\ref{lmm:beta} is repeated below.

\begin{replemma}{lmm:beta}
Suppose $X\sim \Beta(\alpha t,\alpha(1-t))$ for some $\alpha>0$, then for all $t\in (0,1)$,
\[
h_\alpha \leq \frac{d}{dt}\mathbb{P}\left\{X\geq 1/2\right\} \leq H_\alpha,
\]
where $h_\alpha = \tfrac{1}{3}2^{-\alpha-2}$ and $H_\alpha= (3+2e^{-1})\alpha+14$.
\end{replemma}

\begin{proof}
We can assume WOLG that $t\in (0,1/2]$. That is because for $t>1/2$, $1-X\sim \Beta(\alpha(1-t),\alpha t)$ and
\[
\frac{d}{dt}\mathbb{P}\left\{X\geq 1/2\right\} = \frac{d}{d(1-t)}\mathbb{P}\left\{1-X\geq 1/2\right\}.
\]
Let $\phi_t(x)=x^{\alpha t-1}(1-x)^{\alpha(1-t)-1}$ be the unnormalized density of the Beta$(\alpha t,\alpha(1-t))$ distribution. 
Since $ \frac{d}{dt}\phi_t(x) = \alpha \ln\frac{x}{1-x}\phi_t(x)$, 
we have
\begin{align*}
\frac{d}{dt}\mathbb{P}\left\{X\geq 1/2\right\} = & \frac{d}{dt}\frac{\int _{1/2}^1 \phi_t(x)dx}{\int_0^1 \phi_t(x)dx}\\
=  & \alpha \frac{\int_{1/2}^1 \ln\frac{x}{1-x}\phi_t(x)dx \int_0^1 \phi_t(x)dx - \int_{1/2}^1 \phi_t(x)dx \int_0^1 \ln\frac{x}{1-x}\phi_t(x)dx}{\left(\int_0^1 \phi_t(x)dx\right)^2}\\
= & \alpha\left[\mathbb{E}\left(\mathds{1}\{X\geq 1/2\}\ln\frac{X}{1-X}\right) - \mathbb{P}\{X\geq 1/2\}\mathbb{E}\left(\ln\frac{X}{1-X}\right)\right].
\end{align*}

To prove the lemma, we claim that for $t\leq 1/2$,
\begin{equation}
\label{eq:exp.prod}
2^{-\alpha-2}t\leq\alpha \mathbb{E}\left(\mathds{1}\{X\geq 1/2\}\ln\frac{X}{1-X}\right) \leq \max\{3\alpha,12\};
\end{equation}
\begin{equation}
\label{eq:prod.exp}
\left[2^{-\alpha-2}\left(\tfrac{1}{2}-\tfrac{t}{1-t} \right)\right]_+\leq - \alpha\mathbb{P}\{X\geq 1/2\}\mathbb{E}\left(\ln\frac{X}{1-X}\right)\leq 2e^{-1}\alpha+2,
\end{equation}
where $[\cdot]_+=\max\{\cdot,0\}$ stands for the positive part.

The upper bound $\frac{d}{dt} \mathbb{P}\{X \ge 1/2\} \leq H_\alpha$ follows easily from adding up the two upper bounds. 
For the lower bound on the derivative, the two lower bounds in~\eqref{eq:exp.prod} and~\eqref{eq:prod.exp} yield
\[
\frac{d}{dt}\mathbb{P}\left\{X\geq 1/2\right\} \geq 2^{-\alpha -2}\left(t+\left(\frac{1}{2}-\frac{t}{1-t}\right)_+\right) \geq \frac{1}{3}2^{-\alpha -2}=h_\alpha,
\]
where the last equality is achieved at $t=1/3$.{}

It remains to prove~\eqref{eq:exp.prod} and~\eqref{eq:prod.exp}. 
Let us start from the cross-product term~\eqref{eq:exp.prod}. Since $\mathds{1}\{X\geq 1/2\}\ln\frac{X}{1-X}\geq 0$, by Tonelli's theorem,
\[
\mathbb{E}\left(\mathds{1}\{X\geq 1/2\}\ln\frac{X}{1-X}\right) = \int_0^\infty \mathbb{P}\left\{\mathds{1}\{X\geq 1/2\}\ln\frac{X}{1-X}>s\right\}ds =\int_0^\infty \mathbb{P}\left\{X\geq \frac{e^s}{1+e^s}\right\}ds.
\]
The density function of $X$ allows us to write
\begin{align}
\label{eq:cross.ratio}
\mathbb{E}\left(\mathds{1}\{X\geq 1/2\}\ln\frac{X}{1-X}\right)= \frac{\int_0^\infty\int_{\frac{e^s}{1+e^s}}^1 x^{\alpha t-1}(1-x)^{\alpha(1-t)-1}dxds}{B(\alpha t,\alpha(1-t))},
\end{align}
where $B(\alpha,\beta)=\int_{0}^1 s^{\alpha-1} (1-s)^{\beta-1} ds$ is the Beta function. First we prove the upper bound in~\eqref{eq:exp.prod}. For the numerator, since $\alpha t-1> -1$ and $x\geq \frac{e^s}{1+e^s}\geq 1/2$, we have $x^{\alpha t-1}\leq 2$, and
\[
\int_{\frac{e^s}{1+e^s}}^1 x^{\alpha t-1}(1-x)^{\alpha(1-t)-1}dx \leq 2\int_{\frac{e^s}{1+e^s}}^1 (1-x)^{\alpha(1-t)-1}dx=\frac{2(1+e^{s})^{-\alpha(1-t)}}{\alpha(1-t)}.
\]
Therefore the numerator of~\eqref{eq:cross.ratio} is upper bounded by
\[
2\int_0^\infty \frac{e^{-\alpha(1-t)s}}{\alpha(1-t)}ds =\frac{2}{ \alpha^2 (1-t)^2 } \leq \frac{8}{\alpha^2}
\]
for all $t\leq 1/2$. Moreover,
\[
B(\alpha t,\alpha(1-t)) =\frac{\Gamma(\alpha t)\Gamma(\alpha (1-t))}{\Gamma(\alpha)} 
\]
is minimized at $t=1/2$ by the log-convexity of the Gamma function $\Gamma(z)$~\cite{artin2015gamma},
where $\Gamma(z)=\int_0^\infty s^{z-1} e^{-s} ds$ satisifying $\Gamma(z+1)=z\Gamma(z)$ for $z>0.$ Hence it follows from~\eqref{eq:cross.ratio}
that for all $t\leq 1/2$,
\begin{equation}
\label{eq:cross.upper}
\alpha\mathbb{E}\left(\mathds{1}\{X\geq 1/2\}\ln\frac{X}{1-X}\right) \leq \frac{8\Gamma(\alpha)}{\alpha\Gamma(\alpha/2)^2}.
\end{equation}
We claim that the right-hand side of~\eqref{eq:cross.upper} is a non-decreasing function in $\alpha$ on $(0,\infty)$. To see that, let $g(\alpha)=8\Gamma(\alpha)/(\alpha \Gamma(\alpha/2)^2)$. We have
\begin{equation}
\label{eq:diff.log.g}
\frac{d}{d\alpha}(\ln g(\alpha))=\frac{\Gamma'(\alpha)}{\Gamma(\alpha)}-\frac{1}{\alpha}-\frac{\Gamma'(\alpha/2)}{\Gamma(\alpha/2)}=\psi(\alpha)-\psi(\alpha/2)-\frac{1}{\alpha}.
\end{equation}
Here $\psi(\cdot)=\Gamma'(\cdot)/\Gamma(\cdot)$ is the digamma function with expansion~\cite[6.3.16]{abramowitz1948handbook}
\[
\psi(1+z)=-\gamma+\sum_{n=1}^\infty\frac{z}{n+z},
\]
where $\gamma$ is the Euler-Mascheroni constant. Applying the expansion on~\eqref{eq:diff.log.g} yields
\[
\frac{d}{d\alpha}\left(\ln g(\alpha) \right)=\sum_{n=1}^\infty\left(\frac{\alpha-1}{n+\alpha-1}-\frac{\alpha/2-1}{n+\alpha/2-1}\right)-\frac{1}{\alpha}\geq \frac{\alpha-1}{1+\alpha-1}-\frac{\alpha/2-1}{1+\alpha/2-1}-\frac{1}{\alpha}=0.
\]
We have shown that $g$ is a non-decreasing function on $\mathbb{R}^+$. It follows from~\eqref{eq:cross.upper} that for all $\alpha\leq 4$, $\alpha\mathbb{E}(\mathds{1}\{X\geq 1/2\}\ln\frac{X}{1-X})\leq g(4)=12$. 

Next we show that for all $\alpha>4$, the cross-product term in~\eqref{eq:exp.prod} is upper bounded by $3\alpha$. By Markov's inequality,
\[
\mathbb{P}\left\{X\geq \frac{e^s}{1+e^s}\right\}
=\mathbb{P}\left\{1-X\leq \frac{1}{1+e^s}\right\}
=\mathbb{P}\left\{ \frac{1}{1-X} \geq 1+e^s \right\}
\leq \frac{1}{1+e^s}\mathbb{E}\left[\frac{1}{1-X}\right].
\]
Since $1-X\sim \Beta(\alpha(1-t),\alpha t)$, we have
\[
\mathbb{E}\left[\frac{1}{1-X} \right]=\frac{\int_0^1 x^{\alpha(1-t)-2}(1-x)^{\alpha t-1}dx}{\int_0^1 x^{\alpha(1-t)-1}(1-x)^{\alpha t-1}dx}.
\]
For all $\alpha\geq 4$ and $t\leq 1/2$, $\alpha(1-t)-1\geq 0$, hence both integrals converge, and
\[
\mathbb{E}\left[\frac{1}{1-X} \right]=\frac{B(\alpha(1-t)-1,\alpha t)}{B(\alpha(1-t),\alpha t)}
=\frac{\Gamma(\alpha(1-t)-1)\Gamma(\alpha t)/\Gamma(\alpha-1)}{\Gamma(\alpha(1-t))\Gamma(\alpha t)/\Gamma(\alpha)}
=\frac{\alpha-1}{\alpha(1-t)-1}\leq 3
\]
when $\alpha\geq 4$. Therefore
\[
\alpha \mathbb{E}\left(\mathds{1}\{X\geq 1/2\}\ln\frac{X}{1-X}\right)\leq 3\alpha\int_0^\infty \frac{1}{1+e^s}ds \leq 3\alpha.
\]
That finishes the proof of the upper bound in~\eqref{eq:exp.prod}. Next we prove the lower bound in~\eqref{eq:exp.prod}.
Since $x^{\alpha t-1}\geq \min\{(1/2)^{\alpha t-1},1\}$ for all $x\geq e^s/(1+e^s)\geq 1/2$, we have that the numerator in~\eqref{eq:cross.ratio} is lower bounded by
\begin{align}
\nonumber & \min\left\{\left(\tfrac{1}{2}\right)^{\alpha t-1}, 1\right\}\int_0^\infty\int_{\frac{e^s}{1+e^s}}^1(1-x)^{\alpha(1-t)-1} dxds\\
\nonumber = & \frac{\min\left\{\left(\frac{1}{2}\right)^{\alpha t-1}, 1\right\}}{\alpha(1-t)}\int_0^\infty \left(\frac{1}{1+e^s}\right)^{\alpha(1-t)} ds\\
\nonumber \geq & \frac{\min\left\{\left(\frac{1}{2}\right)^{\alpha t-1}, 1\right\}\left(\frac{1}{2}\right)^{\alpha(1-t)}}{\alpha(1-t)}\int_0^\infty e^{-s\alpha(1-t)}ds\\
\label{eq:cross.ratio.num.lower}= & \frac{\left(\frac{1}{2}\right)^{\max\{\alpha-1,\alpha(1-t)\}}}{\alpha^2(1-t)^2}\geq \frac{2^{-\alpha}}{\alpha^2}.
\end{align}
To handle the denominator in~\eqref{eq:exp.prod}, note that $(1-x)^{\alpha(1-t)-1}\leq 2$ for all $x\leq 1/2$ and $x^{\alpha t-1}\leq 2$ for all $x\geq 1/2$. Therefore the denominator in~\eqref{eq:exp.prod}
\begin{equation}
\label{eq:cross.ratio.den.upper}
B(\alpha t,\alpha (1-t))\leq 2 \int_0^{1/2}x^{\alpha t-1}dx+2\int_{1/2}^1(1-x)^{\alpha(1-t)-1}dx
=  2\left[\frac{2^{-\alpha t}}{\alpha t}+\frac{2^{-\alpha(1-t)}}{\alpha(1-t)}\right] \leq \frac{2}{\alpha t(1-t)}.
\end{equation}
Combining~\eqref{eq:cross.ratio},~\eqref{eq:cross.ratio.num.lower} and~\eqref{eq:cross.ratio.den.upper} yields
\[
\alpha \mathbb{E}\left(\mathds{1}\{X\geq 1/2\}\ln\frac{X}{1-X}\right) \geq \alpha\frac{2^{-\alpha} \alpha t(1-t)}{2\alpha^2}\geq 2^{-\alpha-2} t.
\]

Next let us prove~\eqref{eq:prod.exp}. Firstly, write
\[
\mathbb{E}\left(\ln\frac{X}{1-X}\right)=\psi(\alpha t)-\psi(\alpha)-\left(\psi(\alpha(1-t))-\psi(\alpha)\right)=\psi(\alpha t)-\psi(\alpha (1-t))
\]
where we recall that $\psi(z)=\frac{d}{dz}\ln \Gamma(z)$ is the digamma function. Since $\Gamma$ is log-convex on $\mathbb{R}^+$, $\psi$ is non-decreasing. Therefore for all $t\leq 1/2$, we have
\[
-\alpha \mathbb{P}\{X\geq 1/2\}\mathbb{E}\left(\ln\frac{X}{1-X}\right)\geq 0.
\]
Furthermore, it has been shown in~\cite[Eq~(2.2)]{alzer1997some} that for all $z>0$, the digamma function satisfies
\begin{equation}
\label{eq:psi.bound}
\frac{1}{2z}<\ln z-\psi(z)<\frac{1}{z}.
\end{equation}
Therefore
\begin{equation}
\label{eq:exp.log.lower}
-\mathbb{E}\left(\ln\frac{X}{1-X}\right)=\psi(\alpha(1-t))-\psi(\alpha t)\geq \ln(\alpha(1-t))-\frac{1}{\alpha(1-t)}-\ln(\alpha t)+\frac{1}{2\alpha t}\geq \frac{1}{\alpha}\left(\frac{1}{2t}-\frac{1}{1-t}\right)
\end{equation}
when $t\leq 1/2$. 

We still need to bound $\mathbb{P}\{X\geq 1/2\}$ from below. As in the proof of~\eqref{eq:exp.prod}, we can write
\begin{equation}
\label{eq:tail.ratio}
\mathbb{P}\left\{X\geq \tfrac{1}{2}\right\}=\frac{\int_{1/2}^1 x^{\alpha t-1}(1-x)^{\alpha (1-t)-1}dx}{B(\alpha t,\alpha (1-t))}.
\end{equation}
Again from $x^{\alpha t-1}\geq \min\{ (1/2)^{\alpha t-1}, 1\}$ for all $x\geq 1/2$, we have that the numerator of~\eqref{eq:tail.ratio} is bounded from below by
\[
\max\left\{ \left(\tfrac{1}{2}\right)^{\alpha t-1},1\right\}\int_{1/2}^1 (1-x)^{\alpha (1-t)-1}dx\geq \frac{2^{-\alpha}}{\alpha}.
\]
Combining the last displayed equation with~\eqref{eq:cross.ratio.den.upper} and~\eqref{eq:tail.ratio} yields that 
$$
\mathbb{P}\left\{X\geq \tfrac{1}{2}\right\} \ge  \frac{2^{-\alpha}}{\alpha} \times \frac{\alpha t(1-t)}{2}  \ge 2^{-\alpha-2} t
$$
for all $t \le 1/2$.
In view of~\eqref{eq:exp.log.lower}, it follows that 
\[
-\alpha\mathbb{P}\left\{X\geq \tfrac{1}{2}\right\}\mathbb{E}\left(\ln\frac{X}{1-X}\right)\geq 2^{-\alpha -2}\left(\frac{1}{2}-\frac{t}{1-t}\right).
\]
That concludes the proof of the lower bound in~\eqref{eq:prod.exp}. Next we move to the upper bound in~\eqref{eq:prod.exp}.
By Markov's inequality,
\begin{equation}
\label{eq:markov}
\mathbb{P}\{X\geq 1/2\}\leq 2\mathbb{E}X=2t.
\end{equation}
Again from~\eqref{eq:psi.bound} we have that for all $t\leq 1/2$,
\begin{align*}
- \mathbb{E}\left(\ln\frac{X}{1-X}\right) 
=& \psi(\alpha(1-t))-\psi(\alpha t)\\
\leq & \ln(\alpha(1-t))-\frac{1}{2\alpha(1-t)}-\left(\ln(\alpha t)-\frac{1}{\alpha t}\right)\\
 = & \ln\frac{1-t}{t}+\frac{2-3t}{2\alpha t(1-t)}.
\end{align*}
Combining the last displayed equation with~\eqref{eq:markov} yields that
\begin{align*}
- \alpha \mathbb{P}\{X\geq 1/2\}\mathbb{E}\left(\ln\frac{X}{1-X}\right) \leq & 2\alpha t\left(\ln\frac{1-t}{t}+\frac{2-3t}{2\alpha t(1-t)}\right)\\
\leq& (2t\ln (1/t))\alpha + \frac{2-3t}{1-t}\leq  2e^{-1}\alpha +2.
\end{align*}
We have thus established the inequalities~\eqref{eq:exp.prod} and~\eqref{eq:prod.exp}. 
\end{proof}

\newpage

\end{document}